\documentclass{article}

\usepackage[preprint]{neurips_2026}

\usepackage[utf8]{inputenc}
\usepackage[T1]{fontenc}
\usepackage{hyperref}
\usepackage{url}
\usepackage{booktabs}
\usepackage{amsfonts}
\usepackage{amsmath}
\usepackage{amssymb}
\usepackage{amsthm}
\usepackage{mathtools}
\usepackage{nicefrac}
\usepackage{microtype}
\usepackage[table]{xcolor}
\usepackage{graphicx}
\usepackage{multirow}
\usepackage{enumitem}
\usepackage[capitalize,noabbrev]{cleveref}

\theoremstyle{plain}
\newtheorem{theorem}{Theorem}[section]
\newtheorem{proposition}[theorem]{Proposition}
\newtheorem{lemma}[theorem]{Lemma}

\theoremstyle{definition}
\newtheorem{definition}[theorem]{Definition}

\theoremstyle{remark}
\newtheorem{remark}[theorem]{Remark}

\newcommand{\mrr}[2]{\ensuremath{#1 \pm #2}}

\newcommand{\na}{\textemdash}

\newcommand{\valActivityOnlyAlpha}{\mrr{0.9529}{0.0006}}
\newcommand{\valActivityOnlyMOOC}{\mrr{0.1421}{0.0013}}
\newcommand{\valAlphaGATPR}{\mrr{0.9199}{0.0183}}
\newcommand{\valAlphaGATROC}{\mrr{0.6735}{0.0345}}
\newcommand{\valAlphaGCNPR}{\mrr{0.9156}{0.0062}}
\newcommand{\valAlphaGCNROC}{\mrr{0.6666}{0.0137}}
\newcommand{\valAlphaMotifsSAGEPR}{\mrr{0.9667}{0.0008}}
\newcommand{\valAlphaMotifsSAGEROC}{\mrr{0.8302}{0.0029}}
\newcommand{\valAlphaMotifsTGNPR}{\mrr{0.9703}{0.0025}}
\newcommand{\valAlphaMotifsTGNROC}{\mrr{0.8440}{0.0134}}
\newcommand{\valAlphaSAGEPR}{\mrr{0.9282}{0.0055}}
\newcommand{\valAlphaSAGEROC}{\mrr{0.6990}{0.0069}}
\newcommand{\valAlphaTGNPR}{\mrr{0.9312}{0.0076}}
\newcommand{\valAlphaTGNROC}{\mrr{0.7097}{0.0182}}
\newcommand{\valFamAlphaAAll}{\mrr{0.9667}{0.0008}}
\newcommand{\valFamAlphaAOne}{\mrr{0.9699}{0.0002}}
\newcommand{\valFamAlphaAOneAthree}{\mrr{0.9699}{0.0004}}
\newcommand{\valFamAlphaAOneAtwo}{\mrr{0.9674}{0.0009}}
\newcommand{\valFamAlphaAThree}{\mrr{0.9287}{0.0056}}
\newcommand{\valFamAlphaATwo}{\mrr{0.9503}{0.0015}}
\newcommand{\valFamAlphaAtwoAthree}{\mrr{0.9491}{0.0009}}
\newcommand{\valFamMOOCAAll}{\mrr{0.1639}{0.0028}}
\newcommand{\valFamMOOCAOne}{\mrr{0.0301}{0.0003}}
\newcommand{\valFamMOOCAOneAthree}{\mrr{0.0451}{0.0010}}
\newcommand{\valFamMOOCAOneAtwo}{\mrr{0.1644}{0.0007}}
\newcommand{\valFamMOOCAThree}{\mrr{0.0273}{0.0002}}
\newcommand{\valFamMOOCATwo}{\mrr{0.1430}{0.0003}}
\newcommand{\valFamMOOCAtwoAthree}{\mrr{0.1439}{0.0007}}
\newcommand{\valFamPaySimAAll}{\mrr{0.4430}{0.0589}}
\newcommand{\valFamPaySimAOne}{\mrr{0.5062}{0.1695}}
\newcommand{\valFamPaySimAOneAthree}{\mrr{0.5062}{0.1695}}
\newcommand{\valFamPaySimAOneAtwo}{\mrr{0.4430}{0.0589}}
\newcommand{\valFamPaySimAThree}{\mrr{0.3756}{0.0887}}
\newcommand{\valFamPaySimATwo}{\mrr{0.4321}{0.1552}}
\newcommand{\valFamPaySimAtwoAthree}{\mrr{0.4321}{0.1552}}
\newcommand{\valGCGatAcc}{\mrr{0.1667}{0.0000}}
\newcommand{\valGCGatFOne}{\mrr{0.0476}{0.0000}}
\newcommand{\valGCGatMotAcc}{\mrr{0.9400}{0.0149}}
\newcommand{\valGCGatMotFOne}{\mrr{0.9398}{0.0147}}
\newcommand{\valGCGcnAcc}{\mrr{0.7333}{0.0204}}
\newcommand{\valGCGcnFOne}{\mrr{0.6774}{0.0250}}
\newcommand{\valGCGcnMotAcc}{\mrr{0.9267}{0.0091}}
\newcommand{\valGCGcnMotFOne}{\mrr{0.9261}{0.0094}}
\newcommand{\valGCRfAcc}{0.9333}
\newcommand{\valGCRfFOne}{0.9327}
\newcommand{\valGCSageAcc}{\mrr{0.5933}{0.0652}}
\newcommand{\valGCSageFOne}{\mrr{0.5392}{0.0980}}
\newcommand{\valGCSageMotAcc}{\mrr{0.9167}{0.0167}}
\newcommand{\valGCSageMotFOne}{\mrr{0.9145}{0.0186}}
\newcommand{\valGCSvmAcc}{0.8000}
\newcommand{\valGCSvmFOne}{0.7948}
\newcommand{\valLooAlphaA}{-0.01\%}
\newcommand{\valLooAlphaB}{-0.06\%}
\newcommand{\valLooAlphaC}{-0.07\%}
\newcommand{\valLooAlphaD}{-0.07\%}
\newcommand{\valLooAlphaE}{+0.00\%}
\newcommand{\valLooAlphaF}{+0.10\%}
\newcommand{\valLooAlphaG}{-0.29\%}
\newcommand{\valLooAlphaH}{+0.15\%}
\newcommand{\valLooAlphaI}{+0.18\%}
\newcommand{\valLooAlphaJ}{+0.04\%}
\newcommand{\valLooAlphaK}{-0.02\%}
\newcommand{\valLooAlphaL}{-0.04\%}
\newcommand{\valLooAlphaM}{+0.01\%}
\newcommand{\valLooMOOCA}{-0.47\%}
\newcommand{\valLooMOOCB}{-0.80\%}
\newcommand{\valLooMOOCC}{-1.84\%}
\newcommand{\valLooMOOCD}{-1.17\%}
\newcommand{\valLooMOOCE}{-1.17\%}
\newcommand{\valLooMOOCF}{-69.36\%}
\newcommand{\valLooMOOCG}{-1.23\%}
\newcommand{\valLooMOOCH}{-0.18\%}
\newcommand{\valLooMOOCI}{+0.38\%}
\newcommand{\valLooMOOCJ}{-3.86\%}
\newcommand{\valLooMOOCK}{-0.36\%}
\newcommand{\valLooMOOCL}{-0.34\%}
\newcommand{\valLooMOOCM}{-0.53\%}
\newcommand{\valLooPaySimA}{+3.56\%}
\newcommand{\valLooPaySimB}{+0.00\%}
\newcommand{\valLooPaySimC}{+13.64\%}
\newcommand{\valLooPaySimD}{+0.00\%}
\newcommand{\valLooPaySimE}{+0.00\%}
\newcommand{\valLooPaySimF}{-3.68\%}
\newcommand{\valLooPaySimG}{+0.57\%}
\newcommand{\valLooPaySimH}{-19.40\%}
\newcommand{\valLooPaySimI}{+10.76\%}
\newcommand{\valLooPaySimJ}{+0.00\%}
\newcommand{\valLooPaySimK}{+0.00\%}
\newcommand{\valLooPaySimL}{+0.00\%}
\newcommand{\valLooPaySimM}{+0.00\%}
\newcommand{\valMOOCGATPR}{\mrr{0.0156}{0.0002}}
\newcommand{\valMOOCGATROC}{\mrr{0.6348}{0.0024}}
\newcommand{\valMOOCGCNPR}{\mrr{0.0164}{0.0001}}
\newcommand{\valMOOCGCNROC}{\mrr{0.6347}{0.0022}}
\newcommand{\valMOOCMotifsSAGEPR}{\mrr{0.1659}{0.0005}}
\newcommand{\valMOOCMotifsSAGEROC}{\mrr{0.8371}{0.0021}}
\newcommand{\valMOOCMotifsTGNPR}{\mrr{0.1702}{0.0048}}
\newcommand{\valMOOCMotifsTGNROC}{\mrr{0.8407}{0.0019}}
\newcommand{\valMOOCSAGEPR}{\mrr{0.0164}{0.0001}}
\newcommand{\valMOOCSAGEROC}{\mrr{0.6425}{0.0024}}
\newcommand{\valMOOCTGNPR}{\mrr{0.0375}{0.0021}}
\newcommand{\valMOOCTGNROC}{\mrr{0.7340}{0.0031}}
\newcommand{\valMotifBLAlphaGlobal}{\mrr{0.9303}{0.0047}}
\newcommand{\valMotifBLAlphaStatic}{\mrr{0.9746}{0.0019}}
\newcommand{\valMotifBLAlphaTemporal}{\mrr{0.9667}{0.0008}}
\newcommand{\valMotifBLMOOCGlobal}{\mrr{0.0161}{0.0004}}
\newcommand{\valMotifBLMOOCStatic}{\mrr{0.1564}{0.0008}}
\newcommand{\valMotifBLMOOCTemporal}{\mrr{0.1659}{0.0005}}
\newcommand{\valMotifBLPaySimGlobal}{\mrr{0.4958}{0.0733}}
\newcommand{\valMotifBLPaySimStatic}{\mrr{0.3256}{0.0926}}
\newcommand{\valMotifBLPaySimTemporal}{\mrr{0.4430}{0.0589}}
\newcommand{\valMotifOnlyAlpha}{\mrr{0.9529}{0.0044}}
\newcommand{\valMotifOnlyMOOC}{\mrr{0.1172}{0.0008}}
\newcommand{\valMotifOnlyPaySim}{\mrr{0.0036}{0.0000}}
\newcommand{\valOTCGATPR}{\mrr{0.9411}{0.0004}}
\newcommand{\valOTCGATROC}{\mrr{0.7908}{0.0017}}
\newcommand{\valOTCGCNPR}{\mrr{0.9257}{0.0013}}
\newcommand{\valOTCGCNROC}{\mrr{0.7588}{0.0031}}
\newcommand{\valOTCMotifsSAGEPR}{\mrr{0.9796}{0.0008}}
\newcommand{\valOTCMotifsSAGEROC}{\mrr{0.9122}{0.0025}}
\newcommand{\valOTCMotifsTGNPR}{\mrr{0.9693}{0.0010}}
\newcommand{\valOTCMotifsTGNROC}{\mrr{0.8854}{0.0044}}
\newcommand{\valOTCSAGEPR}{\mrr{0.9288}{0.0039}}
\newcommand{\valOTCSAGEROC}{\mrr{0.7687}{0.0029}}
\newcommand{\valOTCTGNPR}{\mrr{0.9511}{0.0019}}
\newcommand{\valOTCTGNROC}{\mrr{0.8236}{0.0017}}
\newcommand{\valOverheadFairReview}{11098923.1}
\newcommand{\valOverheadOtcCand}{48817.0}
\newcommand{\valOverheadOtcCandBatch}{54.5}
\newcommand{\valOverheadOtcEpochBase}{0.0590}
\newcommand{\valOverheadOtcEpochMot}{0.0525}
\newcommand{\valOverheadOtcMem}{0.68}
\newcommand{\valOverheadOtcMemBase}{0.67}
\newcommand{\valOverheadOtcPreproc}{2.05}
\newcommand{\valOverheadRevCand}{46.0}
\newcommand{\valOverheadRevMem}{3.19}
\newcommand{\valOverheadRevPreproc}{237.26}
\newcommand{\valPaySimGATPR}{\mrr{0.2917}{0.1888}}
\newcommand{\valPaySimGATROC}{\mrr{0.9108}{0.0278}}
\newcommand{\valPaySimGCNPR}{\mrr{0.2996}{0.2116}}
\newcommand{\valPaySimGCNROC}{\mrr{0.7350}{0.1968}}
\newcommand{\valPaySimMotifsSAGEPR}{\mrr{0.4430}{0.0589}}
\newcommand{\valPaySimMotifsSAGEROC}{\mrr{0.8707}{0.0294}}
\newcommand{\valPaySimSAGEPR}{\mrr{0.3323}{0.2351}}
\newcommand{\valPaySimSAGEROC}{\mrr{0.7634}{0.1412}}
\newcommand{\valRandomFeatAlpha}{\mrr{0.9311}{0.0013}}
\newcommand{\valRandomFeatMOOC}{\mrr{0.0172}{0.0003}}
\newcommand{\valRandomFeatPaySim}{\mrr{0.4575}{0.1092}}
\newcommand{\valRecencyOnlyAlpha}{\mrr{0.9672}{0.0009}}
\newcommand{\valRecencyOnlyMOOC}{\mrr{0.0162}{0.0002}}
\newcommand{\valTgbLeaderboardCoin}{\mrr{0.832}{0.001}}
\newcommand{\valTgbLeaderboardReview}{\mrr{0.377}{0.010}}
\newcommand{\valTgbLeaderboardWiki}{\mrr{0.827}{0.001}}

\title{Temporal Motif Signatures for\\Temporal Graph Neural Networks}

\author{%
  Dylan Sandfelder \\
  University of Oxford \\
  \texttt{dylan.sandfelder@eng.ox.ac.uk}
  \And
  Mihai Cucuringu \\
  University of California, Los Angeles \\
  \texttt{mihai@math.ucla.edu}
  \And
  Xiaowen Dong \\
  University of Oxford \\
  \texttt{xdong@robots.ox.ac.uk}
}

\begin{document}

\maketitle

\begin{abstract}
  Real temporal interaction streams carry predictive structure in short-horizon motif patterns---repetition, reciprocity, star diversity, triadic flow---that vanilla temporal graph neural networks (TGNNs) often fail to expose to their edge scorers. We show this concretely on MOOC interaction prediction, where a small four-feature family of past-window star counts already delivers most of the lift over a strong static GNN. Across a wide set of real and synthetic temporal datasets we find that motif activity organizes consistently along three scale-stable axes (dyadic recency/reciprocity, star diversity, triadic flow), and we use this empirical structure to design a compact $13$-coordinate, leakage-safe, candidate-local motif feature map $h(u,v,t)$ that linearly embeds into any static or temporal encoder without architectural changes. A temporal Weisfeiler-Leman (WL) analysis places the augmentation relative to the first level of an anchored temporal-WL hierarchy and exhibits a candidate-anchored pair on which motif features distinguish. We demonstrate empirically that the same augmentation consistently lifts performance across heterogeneous tasks: TGB link-property prediction across all five baselines, edge classification on Bitcoin Alpha/OTC and MOOC, and graph-level classification of synthetic temporal generators.
\end{abstract}

\section{Introduction}
\label{sec:intro}
\vspace{-0.5em}

A continuous-time interaction stream carries information not only about who interacts with whom, but in the \emph{temporal microstructure} of interactions: whether a conversation is reciprocated within seconds or hours, whether a transaction burst involves a broadening or narrowing neighborhood, whether the third party bridging two accounts is the same today as yesterday. These phenomena are signatures of the generating process, and different domains---social communication, financial transfers, online reviews, peer lending---produce strikingly different signatures \citep{paranjape2017motifs,benson2016higher,longa2023graph}.

Modern temporal graph neural networks (TGNNs) are expressive sequence models over these interaction streams \citep{rossi2020temporal,xu2020inductive,wang2021inductive,yu2023towards,cong2023graphmixer}. They can, in principle, extract any function of the temporal stream; in practice, their inductive biases and limited training budget makes them incapable of capturing specific short-horizon structural regularities that are (a) straightforward to compute analytically, and (b) key to the specific task at hand. In this paper, we ask: \emph{what is a compact set of short-horizon statistics that characterizes a temporal stream, how do current TGNNs fare on them, and can the residual gap between the two be reduced with a principled, architecture-agnostic augmentation?}

\vspace{-0.5em}
\paragraph{Three contributions.}
Towards answering the question above, this paper makes three contributions. \textit{Contribution 1 (diagnostics, Section~\ref{sec:characterization}).} We articulate that per-candidate, past-window motif counts organize real and synthetic temporal streams along three empirically dominant axes: dyadic recency/reciprocity ($A_1$), star diversity ($A_2$), and triadic flow ($A_3$); the resulting per-dataset signatures are scale-stable and predict the per-dataset motif-induced gain. \textit{Contribution 2 (temporal expressivity, Section~\ref{sec:theory}).} We show that standard temporal message-passing GNNs (MP-GNNs) are bounded by the first level of a candidate-anchored temporal Weisfeiler-Leman hierarchy under a common edge-scoring abstraction; we exhibit a candidate-anchored pair on which motif features distinguish but temporal-$1$-WL collapses. The role of this theoretical result is to locate the augmentation relative to a standard temporal MP-GNN. \textit{Contribution 3 (empirical performance, Section~\ref{sec:experiments}).} We demonstrate that motif augmentation, compared to strong TGNN baselines, produces a reproducible \emph{motif-induced gain} on a fixed baseline$\times$dataset pair. The gain is also interpretable: clear on \texttt{tgbl-review-v2} and on \texttt{tgbl-wiki-v2} with complex temporal streams, and within one standard deviation of zero where the baseline captures the relevant short-horizon structure. 

The paper is organized as follows. Section~\ref{sec:background} presents the necessary technical background; Section~\ref{sec:toy} opens with a worked MOOC example; Section~\ref{sec:characterization} extends that analysis to a wide range of datasets; Section~\ref{sec:method} formalizes a leakage-safe per-candidate $13$-coordinate motif feature map $h(u,v,t)$ with a single linear interface to any static or temporal encoder; Section~\ref{sec:theory} gives the temporal-WL placement of this augmentation; Section~\ref{sec:experiments} measures the impact of motif augmentation across TGB link prediction, edge classification, and graph-level classification tasks; Section~\ref{sec:related} identifies the gaps in related work; Section~\ref{sec:limitations} discusses the limitations of this framework.

\vspace{-0.5em}
\section{Background}
\label{sec:background}
\vspace{-0.5em}

\paragraph{Temporal interaction streams.}
A directed temporal interaction stream is a sequence $\mathcal{D}=\{(u_i,v_i,t_i,x_i,y_i)\}_{i=1}^{N}$ where $u_i,v_i\in V$, $t_i\in\mathbb{R}$, $t_1\le\cdots\le t_N$, $x_i$ is an edge-local feature vector, and $y_i$ is a (possibly partial) task label. The \emph{underlying static graph} $G_{\mathrm{static}}=(V,E)$ has $E=\{(u_i,v_i):1\le i\le N\}$. Here, multiple temporal events between a given $(u,v)$ collapse to a single static edge.

\vspace{-0.5em}
\paragraph{Message passing and WL test.}
A message-passing graph neural network (MP-GNN) updates node states via
$h_v^{(t)} = \psi^{(t)}(h_v^{(t-1)}, \mathrm{AGG}^{(t)}\{\phi^{(t)}(h_v^{(t-1)},h_u^{(t-1)},x_{uv}) : (u,v)\in E\})$
for $t=1,\dots,T$. Standard MP-GNNs are bounded by the $1$-dimensional Weisfeiler-Leman test ($1$-WL) \citep{morris2019weisfeiler,xu2018powerful}; temporal MP-GNNs that add time encodings \citep{xu2020inductive,rossi2020temporal,yu2023towards,cong2023graphmixer} lift this bound to a \emph{temporal} analogue developed in Section~\ref{sec:theory}.

\vspace{-0.5em}
\paragraph{$\delta$-temporal motifs.}
Given a small directed multigraph $H$ with $m$ edges and an ordering of those edges, a $\delta$-temporal motif instance in $\mathcal{D}$ is a sequence of $m$ temporal edges matching $H$ structurally, with timestamps strictly increasing and spanning at most $\delta$ time \citep{paranjape2017motifs}. Our motif family (Section~\ref{sec:method}) specializes this to edge-anchored, past-only counts with bounded $\delta=\Delta$. Section~\ref{sec:toy} and Section~\ref{sec:characterization} use the term \textit{``motif count in a past window''}  informally before Section~\ref{sec:method} formalizes the $13$-coordinate map $h(u,v,t)$. Extended background on MP-GNNs, WL, and temporal motif literature is provided in Appendix~\ref{app:extended_related}.

\vspace{-0.5em}
\section{A motivating example: temporal stars in MOOC}
\label{sec:toy}
\vspace{-0.5em}

To make the residual structural gap concrete, we focus on a single dataset and single feature before introducing any formal machinery.

\vspace{-0.5em}
\paragraph{Setup.}
The MOOC interaction-classification task \citep{kumar2019predicting} is a $\sim\!1\%$ positive-class binary problem on a temporal student--item interaction stream. A vanilla 2-layer GraphSAGE encoder \citep{hamilton2017inductive} trained on this stream achieves a test PR-AUC of \valMOOCSAGEPR\ (across three seeds), barely above the prevalence floor: the encoder's edge representation does not by itself expose the short-horizon structure around the candidate $(u,v,t)$ that is informative on this task.

\vspace{-0.5em}
\paragraph{One coordinate.}
Now define, informally, a single past-window count, i.e., for a candidate user $u$ at time $t$, the number of distinct items that $u$ has interacted with in the time interval $[t-\Delta,t)$. This is a count of the \emph{star} pattern centered on $u$---one node with several past out-edges to distinct neighbors---and we will later call it $\texttt{star\_u\_out}$ in our $13$-coordinate motif map. Concatenating the four star counts ($\texttt{star\_u\_out}$, $\texttt{star\_u\_in}$, $\texttt{star\_v\_out}$, $\texttt{star\_v\_in}$, the family we will call $A_2$ in Section~\ref{sec:method}) to the GraphSAGE edge representation lifts MOOC PR-AUC from \valMOOCSAGEPR\ to \valFamMOOCATwo, recovering most of the \valMOOCMotifsSAGEPR\ obtained with all $13$ coordinates (Table~\ref{tab:toy_mooc}).

\begin{table}[t]
\centering
\caption{MOOC interaction-classification PR-AUC on the GraphSAGE backbone, mean$\pm$std across three seeds. A single short-horizon motif family ($A_2$, four star counts) recovers most of the full $13$-feature lift. A leave-one-out ablation on the full feature set (Appendix~\ref{app:ablations_loo}, Table~\ref{tab:leave_one_out}) further isolates \texttt{star\_u\_out} as the dominant single coordinate ($\sim\!69\%$ relative drop when removed).}
\label{tab:toy_mooc}
\small\setlength{\tabcolsep}{8pt}
\renewcommand{\arraystretch}{1.1}
\begin{tabular}{l c}
\toprule
\textbf{Setting} & \textbf{Test PR-AUC} \\
\midrule
SAGE (no motifs)                              & \valMOOCSAGEPR \\
SAGE + $A_2$ star counts only ($4$ features)  & \valFamMOOCATwo \\
SAGE + full motif map ($13$ features)         & \textbf{\valMOOCMotifsSAGEPR} \\
\bottomrule
\end{tabular}
\end{table}

\begin{figure}[t]
  \centering
  \includegraphics[width=0.75\linewidth]{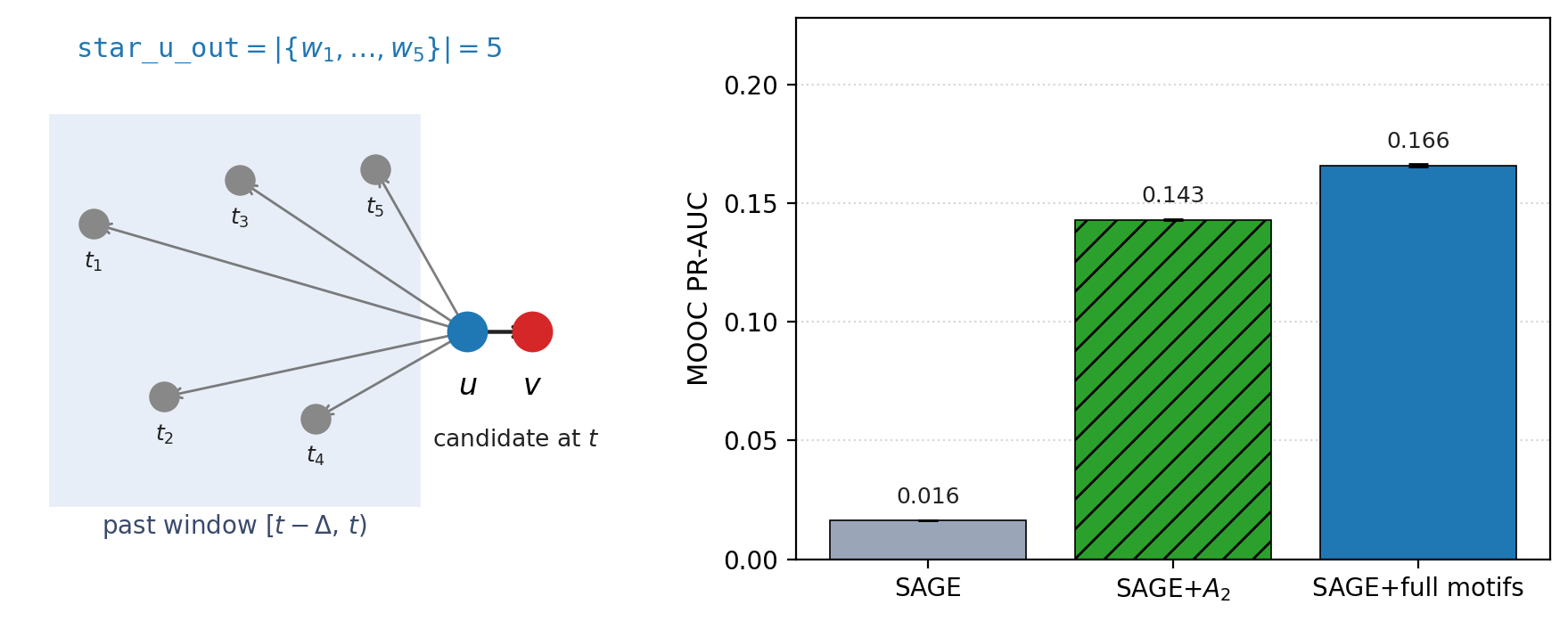}
  \caption{Past-window star pattern around a MOOC candidate, and the resulting PR-AUC ladder. A single short-horizon family of star counts already turns a near-chance task into a useful predictor.}
  \label{fig:toy_mooc}
\end{figure}

\vspace{-0.5em}
\paragraph{Why this matters.}
The interpretation is concrete (Figure~\ref{fig:toy_mooc}): students whose set of recently-touched items is large differ systematically from those whose set is small, and this single piece of past-window structure---expressible as the count of distinct out-neighbors of $u$ in a star pattern---is enough to turn a near-chance task into a useful predictor. A leave-one-out ablation over the full $13$-coordinate feature set (Appendix~\ref{app:ablations_loo}, Table~\ref{tab:leave_one_out}) confirms that $\texttt{star\_u\_out}$ alone accounts for $\sim\!69\%$ of the full motif-augmentation lift on MOOC.

\vspace{-0.5em}
\paragraph{Beyond MOOC.}
Two questions follow: which other short-horizon patterns appear in real and synthetic temporal streams, and how do we package them into a fixed feature family that plugs into existing temporal GNN encoders? Section~\ref{sec:characterization} addresses the first across $13$ datasets, and Section~\ref{sec:method} addresses the second by formalizing the $13$-coordinate motif map.

\vspace{-0.5em}
\section{Empirical diagnostic across temporal streams}
\label{sec:characterization}
\vspace{-0.5em}

The toy example of Section~\ref{sec:toy} showed that a single past-window star count carries strong predictive signal on MOOC. To see whether this is an isolated phenomenon, we compute per-candidate, past-window motif counts---generalizing the single $\texttt{star\_u\_out}$ coordinate of Section~\ref{sec:toy} to a fixed family of $13$ \emph{dyadic}, \emph{star}, and \emph{triadic} coordinates whose precise definitions we defer to Section~\ref{sec:method}---on Bitcoin Alpha, Bitcoin OTC, MOOC, PaySim, the three TGB link-property datasets, and six synthetic generator families (Erd\H{o}s--R\'enyi, Barab\'asi--Albert, Watts--Strogatz, configuration model, stochastic block model, and a degree-preserving rewiring control). Throughout this section we treat these counts as the abstract object $c(u,v,t)\in\mathbb{R}^{13}$ and use them descriptively; their exact definitions, the algorithm, normalization, and encoder interface follow in Section~\ref{sec:method}.

\vspace{-0.5em}
\paragraph{Cross-dataset signatures.}
For each dataset $\mathcal{D}$ and a dataset-specific window radius $\Delta$, we summarize the per-edge count vector $c(u,v,t)$ by its empirical mean $\bar c(\mathcal{D})=\tfrac{1}{N}\sum_i c(u_i,v_i,t_i)$ and covariance $\widehat\Sigma(\mathcal{D})$. Analyzing $\bar c$ and the leading eigendirections of $\widehat\Sigma$ reveals three dominant axes of variation across domains, which we label $A_1$ (dyadic recency/reciprocity), $A_2$ (star diversity, the family containing the MOOC $\texttt{star\_u\_out}$ coordinate of Section~\ref{sec:toy}), and $A_3$ (triadic flow). Figure~\ref{fig:signatures} visualizes the cross-dataset signatures grouped by axis. The axes are not imposed by feature labelling alone: a leading-PCA decomposition of the per-dataset means yields three components whose loadings concentrate, respectively, on the $A_1$, $A_2$, and $A_3$ coordinates, with cumulative variance explained reported in Appendix~\ref{app:extra_signatures}.

\begin{figure}[t]
  \centering
  \includegraphics[width=0.75\linewidth]{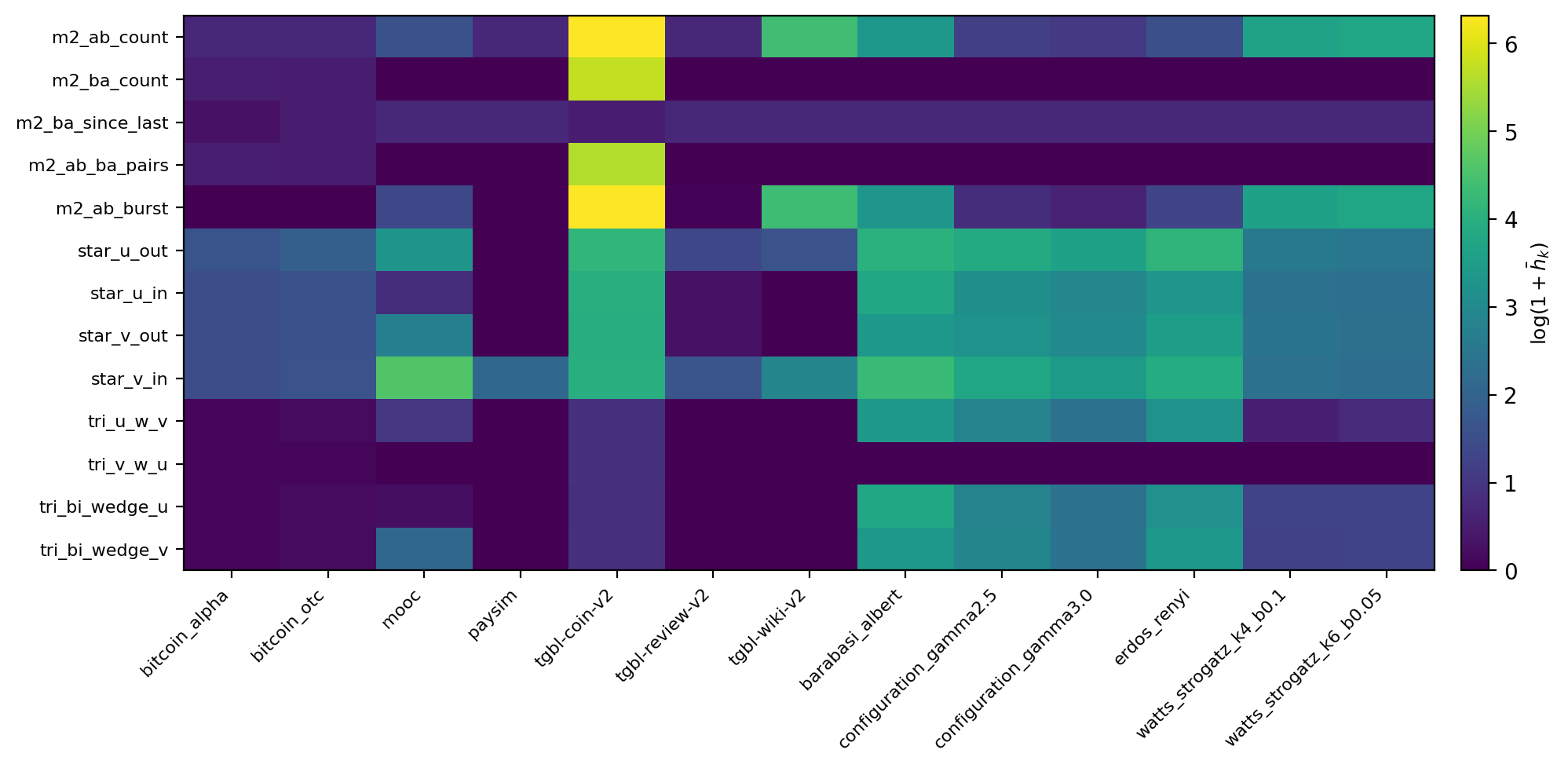}
  \caption{Empirical motif signatures across real and synthetic temporal streams. Rows: $13$ motif coordinates, grouped by family with horizontal separators between $A_1$ (rows 1--5, dyadic recency/reciprocity), $A_2$ (rows 6--9, star diversity), and $A_3$ (rows 10--13, triadic flow); columns: datasets, with vertical separators between real and synthetic streams. The three axes dominate the between-dataset variation in distinct ways and motivate the sub-family decomposition of Section~\ref{sec:method}.}
  \label{fig:signatures}
\end{figure}

We do not claim strict minimality; the leave-one-out sweep (Appendix~\ref{app:ablations_loo}) supports the weaker claim that no single coordinate is uniformly redundant across datasets. Appendix~\ref{app:extra_signatures} additionally verifies that (i) alternative feature families (global motif counts, static-only anchored counts) fail to separate several domain pairs that $c(u,v,t)$ separates; (ii) the three-axis summary is robust to the choice of $\Delta$ across an order of magnitude; and (iii) a linear probe on $\bar c(\mathcal{D})$ recovers the dataset label substantially better than the same probe on global motif counts.

\vspace{-0.5em}
\paragraph{Empirical takeaway.}
Motif activity in real and synthetic temporal streams concentrates on three interpretable axes, and per-dataset signatures along those axes are well-defined properties of the generating process rather than artifacts of a particular draw: most of the $13$ coordinates are substantially more stable across scale perturbations than across domains, with the heavy-tailed dyadic counts the least stable and log-normalized in the formal map of the next section (Appendix~\ref{app:scale_stability}, Figure~\ref{fig:stability}). The per-dataset weights along the three axes will also predict, in Section~\ref{sec:experiments}, where motif augmentation produces the largest gains.

\vspace{-0.5em}
\section{Candidate-local temporal motif feature framework}
\label{sec:method}
\vspace{-0.5em}

We now operationalize Section~\ref{sec:characterization}'s three axes as a small, fixed feature family: per-candidate and past-only (leakage safety); fixed dimension (concatenation with any encoder); and three sub-families in one-to-one correspondence with the empirically dominant axes $(A_1,A_2,A_3)$.

\vspace{-0.5em}
\paragraph{Past-only temporal windows.}
Fix a window radius $\Delta>0$. For a reference time $t$ the \emph{past-only window} is
\begin{equation}
  \mathcal{W}^{\mathrm{past}}_t(\Delta) \;=\; \bigl\{\,i\in\{1,\dots,N\} : t-\Delta \le t_i < t\,\bigr\}.
\end{equation}
Per-node time-sorted in/out event lists allow binary search to retrieve $\mathcal{W}^{\mathrm{past}}_t(\Delta)$ restricted to any node in $O(\log N)$ time \citep{paranjape2017motifs}. The strict inequality $t_i<t$ rules out events at the candidate time, and is the first of the two leakage guards used throughout the paper.

\vspace{-0.5em}
\paragraph{Edge-anchored motif feature map.}
For a candidate $(u,v,t)$ we define a vector $h(u,v,t)\in\mathbb{R}^{13}$ assembled from three sub-families that realize the three axes of Section~\ref{sec:characterization}:

\textit{$A_1$ Dyadic recency/reciprocity (5 features):} counts of past $u\!\to\! v$ and $v\!\to\! u$ events in $\mathcal{W}^{\mathrm{past}}_t(\Delta)$, the minimum of these counts (paired events), the excess of repeat $u\!\to\! v$ events beyond the first (burstiness), and the normalized gap since the last reciprocal $v\!\to\! u$ event.

\textit{$A_2$ Star diversity (4 features):} the sizes of the four sets $\mathcal{N}^{\mathrm{out}}_u,\mathcal{N}^{\mathrm{in}}_u,\mathcal{N}^{\mathrm{out}}_v,\mathcal{N}^{\mathrm{in}}_v$ of distinct neighbors of $u$ and $v$ in the window, excluding $\{u,v\}$ to avoid degeneracy and optionally capped at $C$ to bound work in hubs. The MOOC \texttt{star\_u\_out} coordinate of Section~\ref{sec:toy} is the first of these four.

\textit{$A_3$ Triadic flow (4 features):} the sizes of the two directed cross-wedge intersections $|\mathcal{N}^{\mathrm{out}}_u\cap\mathcal{N}^{\mathrm{in}}_v|$, $|\mathcal{N}^{\mathrm{out}}_v\cap\mathcal{N}^{\mathrm{in}}_u|$, and the two bi-wedge intersections $|\mathcal{N}^{\mathrm{in}}_u\cap\mathcal{N}^{\mathrm{in}}_v|$, $|\mathcal{N}^{\mathrm{out}}_u\cap\mathcal{N}^{\mathrm{out}}_v|$.

\vspace{-0.5em}
\paragraph{Normalization, neighbor cap, and complexity.}
All count coordinates except \texttt{m2\_ba\_since\_last} pass through the per-coordinate transform $x\!\mapsto\!\log_{10}(1+x)$ before entering the linear embedding $M$ below; this is part of the method, not a postprocessing convenience, and is what makes the heavy-tailed dyadic counts comparable across coordinates. To bound work at hubs, neighbor sets exceeding a cap $C$ (default $C{=}100$) are subsampled \emph{deterministically}: the random subset is drawn from a numpy generator seeded by a hash of $(u,v,t)$, so $h(u,v,t)$ is a deterministic, past-only function of the stream and the candidate alone, computed in $O(\log N + C)$ time per candidate. Together with the second leakage guard (static encoders propagate messages over training edges only; temporal encoders discard memory updates past $t$ at evaluation time), this guarantees that motif features for negative candidates use only the same past window as the corresponding positive. Appendix~\ref{app:motif_defs} gives full formal definitions, pseudocode, and the per-dataset cap-bind frequency.

\vspace{-0.5em}
\paragraph{Window-radius selection.}
The single tunable hyperparameter, $\Delta$, is selected from a small grid of $\{10^k : k=1,\dots,8\}$ seconds (rounded to the natural unit of each dataset) on the validation split. Per-dataset selected $\Delta$s, together with a robustness check that selecting $\Delta$ from quantiles of inter-event times yields the same coarse choice, are reported in Appendix~\ref{app:protocol}.

\vspace{-0.5em}
\paragraph{Linear motif embedding and encoder interface.}
A learnable matrix $M\in\mathbb{R}^{13\times d_m}$ maps the count vector to $z_{\mathrm{motif}}(u,v,t)=h(u,v,t)^\top M$. For a static base encoder $g_\theta$ (e.g., GraphSAGE, GCN, GAT) the concatenated edge representation $[z_u,z_v,|z_u-z_v|,\phi_\eta(x(u,v,t)),z_{\mathrm{motif}}(u,v,t)]$ is fed to an MLP head; for a temporal base encoder (TGN, TPNet, DyGFormer, TNCN, HyperEvent), $z_{\mathrm{motif}}(u,v,t)$ is concatenated to the encoder's time-dependent edge representation before the prediction head, leaving the encoder's internal memory or attention unchanged.

\vspace{-0.5em}
\section{A temporal Weisfeiler-Leman placement}
\label{sec:theory}
\vspace{-0.5em}

This section places the motif feature map of Section~\ref{sec:method} relative to a candidate-anchored temporal expressivity hierarchy. We work with \emph{anchored pointed streams} $(\mathcal{D},(u_0,v_0),t)$ identifying a candidate edge at reference time $t$. Define \emph{temporal-$1$-WL} as the standard $1$-WL color refinement extended with a per-message time-bucket label $\mathsf{buck}_\Delta(t-t_i)\in\{1,\dots,B\}$ at fixed quantization $B$, and write $C^{(\ell)}=(c^{(\ell)}_{u_0},c^{(\ell)}_{v_0})$ for the resulting \emph{anchor-pair color} at depth $\ell$ (formal definition: Appendix~\ref{app:theory}, Definition~\ref{def:t1wl}). Under this abstraction, motif features distinguish anchored streams that any temporal-$1$-WL-equivalent edge scorer collapses. The role of the result is to locate the augmentation, not to explain the empirical gains of Section~\ref{sec:experiments}.

\begin{lemma}[Temporal MP-GNN edge scorers are bounded by temporal-$1$-WL]
\label{lem:mpgnn_bounded}
Let $f$ be any temporal message-passing edge scorer that, for a candidate $(u_0,v_0,t)$, computes per-node states via per-layer updates of the form $h_v^{(\ell)}(t)=\psi^{(\ell)}\!\bigl(h_v^{(\ell-1)}(t),\mathrm{AGG}\,\{\phi^{(\ell)}(h_v^{(\ell-1)}(t),h_{u'}^{(\ell-1)}(t),x_{vu'},\tau(t-t_i))\}\bigr)$ over events in $\mathcal{W}^{\mathrm{past}}_t(\Delta)$ via a permutation-invariant aggregator and outputs a score $s(u_0,v_0,t)=\rho\bigl(h_{u_0}^{(T)}(t),h_{v_0}^{(T)}(t)\bigr)$, for any measurable time-encoding $\tau$ with finite range. If temporal-$1$-WL with $B$ buckets matching the discretization of $\tau$ assigns the same anchor-pair color at depth $T$ to two anchored pointed streams, then $f$ assigns them the same score.
\end{lemma}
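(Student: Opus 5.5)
The plan is the standard Xu et al./Morris et al. induction, adapted to time-bucketed messages. We fix the two anchored pointed streams $(\mathcal{D},(u_0,v_0),t)$ and $(\mathcal{D}',(u_0',v_0'),t')$. We assume initial node states $h^{(0)}$ are functions of the initial temporal-$1$-WL colors $c^{(0)}$; these colors include any node features and the anchor marking. The core claim, proved by induction on $\ell$, is that there is a function $F_\ell$ with $h^{(\ell)}_w(t)=F_\ell(c^{(\ell)}_w)$ for every node $w$ in either stream. Equivalently, equal depth-$\ell$ colors force equal depth-$\ell$ states, across the disjoint union of the two streams. The base case $\ell=0$ is exactly the assumption on initial states.

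For the inductive step, recall that temporal-$1$-WL (Definition~\ref{def:t1wl}) refines $c^{(\ell)}_w$ as an injective hash of two things: the old color $c^{(\ell-1)}_w$, and the multiset $\{\!\{(c^{(\ell-1)}_{u'},x_{wu'},\mathsf{buck}_\Delta(t-t_i)) : i\in\mathcal{W}^{\mathrm{past}}_t(\Delta),\ i \text{ incident to } w\}\!\}$. The one genuinely temporal point is that $\tau$ has finite range and the buckets match its discretization. We make this precise by saying $\tau(t-t_i)$ factors as $\tilde\tau(\mathsf{buck}_\Delta(t-t_i))$ for some map $\tilde\tau$. Then each message satisfies
\[
\phi^{(\ell)}\bigl(h^{(\ell-1)}_w,h^{(\ell-1)}_{u'},x_{wu'},\tau(t-t_i)\bigr)=\phi^{(\ell)}\bigl(F_{\ell-1}(c^{(\ell-1)}_w),F_{\ell-1}(c^{(\ell-1)}_{u'}),x_{wu'},\tilde\tau(b_i)\bigr).
\]
So the multiset of messages is the image, under a fixed function, of the WL multiset.

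Permutation invariance of $\mathrm{AGG}$ means its output depends only on the multiset of messages. Hence $h^{(\ell)}_w$ is a function of the pair (old color, WL multiset), and therefore of $c^{(\ell)}_w$, since that hash is injective. This defines $F_\ell$. At depth $T$, equal anchor-pair colors $C^{(T)}=C'^{(T)}$ give $h^{(T)}_{u_0}=h'^{(T)}_{u_0'}$ and $h^{(T)}_{v_0}=h'^{(T)}_{v_0'}$, so $\rho$ returns the same score.

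The expected obstacle is the bookkeeping rather than the induction. First, the bucket alignment has to be stated precisely as the factorization through $\mathsf{buck}_\Delta$; this is where the hypothesis "matching the discretization of $\tau$" is used. Second, the two streams have different reference times $t,t'$; this is harmless because only the relative lags $t-t_i$ enter, through their buckets. Third, multi-events between the same pair must appear as separate multiset elements on both sides, consistently with how $f$ aggregates over events rather than over static edges. Measurability plays no role beyond making $\tau$ a function.
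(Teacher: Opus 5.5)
Your proposal is correct and is essentially the paper's proof: the same induction showing that equal depth-$\ell$ temporal-$1$-WL colors force equal states $h^{(\ell)}$, using that $\tau$ factors through $\mathsf{buck}_\Delta$ and that a permutation-invariant $\mathrm{AGG}$ sees only the message multiset, followed by reading off the two anchors at depth $T$. You are more careful than the paper on the edge feature: Definition~\ref{def:t1wl} hashes only $(\mathrm{dir},\mathsf{buck}_\Delta(t-t_i),c^{(\ell-1)}_{u'})$, so the paper's assertion that each message is a function of the multiset element tacitly needs your inclusion of $x_{wu'}$ in the WL multiset, or else trivial edge features.
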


The proof (Appendix~\ref{app:proof_mpgnn_bounded}) adapts \citet{xu2018powerful,morris2019weisfeiler} to the temporal multiset setting. The quantization assumption matches how TGN, TGAT, DyGFormer and GraphMixer bin or embed times in practice \citep{rossi2020temporal,xu2020inductive,yu2023towards,cong2023graphmixer} and is sharpened in the appendix to a bounded-Lipschitz version with an $O(L\varepsilon)$ approximation gap. A closely related placement appears in \citet{souza2022provably}. The lemma covers the standard temporal MP-GNN edge-scoring abstraction underlying our baselines but not, e.g., models with pair-dependent subgraph extraction or explicit motif enumeration.

\begin{theorem}[Candidate-anchored strict separation]
\label{thm:separation}
There exist two anchored pointed temporal streams $(\mathcal{D}_1,(u_0,v_0),t_1)$ and $(\mathcal{D}_2,(u_0',v_0'),t_2)$ such that (1) no isomorphism between $\mathcal{D}_1$ and $\mathcal{D}_2$ maps the anchor of one to the anchor of the other; (2) temporal-$1$-WL produces equal anchor-pair colors at every refinement depth $\ell$ and every bucketization $B$; (3) $h(u_0,v_0,t_1)\ne h(u_0',v_0',t_2)$, so any edge scorer that injectively reads $h$ distinguishes the two anchored streams.
\end{theorem}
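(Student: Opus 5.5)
I would prove the theorem with an explicit construction modeled on the classical 1-WL counterexample: a disjoint union of two directed triangles versus a directed 6-cycle. The plan is to choose two streams that are locally indistinguishable to color refinement and to make every event carry the same timestamp relative to the reference time.

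\emph{Construction.} In $\mathcal{D}_1$, take nodes $a_1,\dots,a_6$ with events forming two directed triangles $a_1\to a_2\to a_3\to a_1$ and $a_4\to a_5\to a_6\to a_4$. In $\mathcal{D}_2$, take a single directed $6$-cycle $b_1\to b_2\to\cdots\to b_6\to b_1$. Every event is placed at the same time $s$, with $t_1=t_2=s+\delta$ and $0<\delta<\Delta$, so all events lie in $\mathcal{W}^{\mathrm{past}}_t(\Delta)$ and all share one bucket label $\mathsf{buck}_\Delta(\delta)$, whatever $B$ is. The anchors are $(u_0,v_0)=(a_1,a_3)$ and $(u_0',v_0')=(b_1,b_3)$, i.e.\ non-adjacent pairs at directed distance two. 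The candidate edge is a query, not an observed event, so it is not added to the stream.

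\emph{Claims (1) and (2).} For (1), the graphs are not isomorphic, since one is connected and the other is not, so no isomorphism maps anchor to anchor. For (2), I would show by induction on $\ell$ that every node in both streams has the same color $\kappa^{(\ell)}$. At $\ell=0$ all colors are uniform. For the inductive step, each node has exactly one in-message and one out-message, each with the same bucket label and no edge features, from neighbors of color $\kappa^{(\ell)}$. The refined color is therefore the same function of identical multisets, so it is again uniform. It follows that the anchor-pair colors are $(\kappa^{(\ell)},\kappa^{(\ell)})$ in both streams at every depth and for every $B$.

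\emph{Claim (3).} Here I compute $A_3$. In $\mathcal{D}_1$, $\mathcal{N}^{\mathrm{out}}_{a_1}\setminus\{a_1,a_3\}=\{a_2\}$ and $\mathcal{N}^{\mathrm{in}}_{a_3}\setminus\{a_1,a_3\}=\{a_2\}$, so $|\mathcal{N}^{\mathrm{out}}_u\cap\mathcal{N}^{\mathrm{in}}_v|=1$. In $\mathcal{D}_2$, $\mathcal{N}^{\mathrm{out}}_{b_1}=\{b_2\}$ and $\mathcal{N}^{\mathrm{in}}_{b_3}=\{b_2\}$, which also gives $1$. This anchor choice fails, so I will instead pick anchors whose triadic coordinate actually differs.

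\emph{Repaired anchor choice.} Use the reverse-closing pair: $(u_0,v_0)=(a_1,a_3)$ and check the second cross-wedge $|\mathcal{N}^{\mathrm{out}}_v\cap\mathcal{N}^{\mathrm{in}}_u|$. In $\mathcal{D}_1$, $\mathcal{N}^{\mathrm{out}}_{a_3}=\{a_1\}$, which is excluded as an anchor, so the intersection is empty. To avoid such degeneracies, I would use anchors with distinct 1-WL-invisible relations: in $\mathcal{D}_1$ take $(a_1,a_4)$ (different triangles, distance $\infty$), and in $\mathcal{D}_2$ take $(b_1,b_3)$. The anchor-pair colors are still equal by uniformity. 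In $\mathcal{D}_1$ all $A_3$ coordinates are $0$ and $A_1$ is zero, while in $\mathcal{D}_2$ the cross-wedge $|\mathcal{N}^{\mathrm{out}}_{b_1}\cap\mathcal{N}^{\mathrm{in}}_{b_3}|=|\{b_2\}|=1$. Since $\log_{10}(1+\cdot)$ is injective, $h$ differs in that coordinate. Non-isomorphism of the anchored streams follows as before.

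\emph{Main obstacle.} The hard step is checking that the formal temporal-$1$-WL of Definition~\ref{def:t1wl} has no ingredients that break the uniformity argument. Such ingredients could include initial colors depending on anchor identity, or the \texttt{m2\_ba\_since\_last} sentinel for an empty reciprocal history. If anchors are individualized in the initial coloring, the uniform-color induction fails. In that case I would switch to a symmetric pair of constructions: two $6$-node $2$-regular directed graphs with identical anchored unfolding trees, for example anchors at distance three on $C_6$ versus $(a_1,a_4)$ in $2C_3$. I would then verify equality of the anchored universal covers rather than global uniformity.
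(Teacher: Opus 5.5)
Your final witness (directed $2C_3$ with anchors $(a_1,a_4)$ against the directed $C_6$ with anchors $(b_1,b_3)$, all events at one timestamp) is correct and is essentially the paper's proof: the same lifted $C_6$/$2K_3$ pair, uniform temporal-$1$-WL colors by induction, and separation by \texttt{tri\_u\_w\_v} $=0$ vs.\ $1$. The only differences are that the paper bidirects every edge, anchors at the edge $(0,1)$ in both streams, and proves (1) via the anchored common neighbour, where you use global (dis)connectedness. Two corrections: your first choice $(a_1,a_3)$ vs.\ $(b_1,b_3)$ did not actually fail, since the closing event $a_3\to a_1$ makes \texttt{m2\_ba\_count} equal $1$ versus $0$ in the cycle; and the individualization worry is moot because Definition~\ref{def:t1wl} starts from $c^{(0)}_v=X_V(v)$, which is fortunate since your fallback $(b_1,b_4)$ vs.\ $(a_1,a_4)$ would give identical $h$.
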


The witness is the classical $C_6$ vs.\ $2K_3$ pair lifted to anchored directed temporal streams: temporal-$1$-WL collapses the two for every $\ell$ and every $B$, while the four $A_3$ triadic-flow coordinates of $h$ take value $0$ on $C_6$ and $1$ on $2K_3$, isolating $A_3$ as the separator. The full witness construction and proof, a strictly-temporal variant with $K$ distinct timestamps, and a temporal-$k$-WL hierarchy that grades motif families by their WL category are in Appendix~\ref{app:theory}.

\vspace{-0.5em}
\section{Experiments}
\label{sec:experiments}
\vspace{-0.5em}

Section~\ref{sec:tgb} measures Contribution~3 on TGB link-property prediction; Section~\ref{ssec:edge_class} reports edge classification on Bitcoin Alpha/OTC and MOOC; Section~\ref{ssec:graph_class} reports a synthetic generator graph-level task; Sections~\ref{ssec:ablations}--\ref{ssec:controls} report ablations and controls. PaySim, where seed-level variance exceeds the augmentation effect, is reported as a stress test in Appendix~\ref{app:paysim}. All experiments respect two leakage guards: (a) motif counts come only from $\mathcal{W}^{\mathrm{past}}_t(\Delta)$ with $t_i<t$; and (b) static encoders use training edges only for message passing, while temporal encoders discard memory updates past $t$ at evaluation time. Splits are $80/10/10$ chronological for non-TGB data and the official splits for TGB. Baseline numbers use the official released hyperparameters; the motif-augmented variant inherits them verbatim and additionally selects $\Delta$ on the validation split. Per-dataset hyperparameters, architectures, optimizer, negative sampling protocols, and random seeds are in Appendix~\ref{app:protocol}.

\vspace{-0.5em}
\subsection{TGB link property prediction}
\label{sec:tgb}
\vspace{-0.5em}

Table~\ref{tab:tgb-linkprop} reports test MRR on the three TGB link-property benchmarks (\texttt{tgbl-wiki-v2}, \texttt{tgbl-review-v2}, \texttt{tgbl-coin-v2}) for four strong TGNN baselines \citep{rossi2020temporal,yu2023towards,yu2024temporalwalk,gao2025hyperevent} and their motif-augmented counterparts. The task in every block is the official TGB link-property prediction task: rank a positive candidate edge $(u,v,t)$ against the official negative samples by mean reciprocal rank. Rows are ordered by reproduced baseline rank. Headline rows (DyGFormer / TNCN on review, DyGFormer on coin) use five seeds; the remaining rows use three, and we report $\pm 1$ standard deviation throughout with no implicit hypothesis-test claims. Baseline numbers are our own reproductions on the official splits using the released baseline code; Appendix~\ref{app:tgb_leaderboard} positions them against the TGB leaderboard, and every motif-augmentation gain is measured against \emph{our} reproduced baseline rather than an external best.

\begin{table}[t]
\centering
\caption{TGB link-property prediction (test MRR, official splits and evaluator). Higher is better; the task is to rank a positive $(u,v,t)$ above the official negatives. Augmentation lifts every $(\text{baseline}, \text{dataset})$ pair we evaluate, with the largest gain DyGFormer's $0.224\!\to\!0.413$ on \texttt{tgbl-review-v2} (and a sizeable secondary $0.752\!\to\!0.819$ on \texttt{tgbl-coin-v2}); two pairs (TPNet on coin, HyperEvent on coin) are within one standard deviation of zero, consistent with those baselines already capturing the relevant short-horizon structure. Validation MRRs are in Appendix~\ref{app:tgb_leaderboard}.}
\label{tab:tgb-linkprop}
\small
\setlength{\tabcolsep}{4pt}
\renewcommand{\arraystretch}{1.05}
\begin{tabular}{c l c c c}
\toprule
\textbf{Rank} & \textbf{Method} & \texttt{tgbl-wiki-v2} & \texttt{tgbl-review-v2} & \texttt{tgbl-coin-v2} \\
\midrule
1 & TPNet      & \mrr{0.827}{0.001} & \na                & \mrr{0.832}{0.001} \\
\rowcolor{gray!12}
  & \textbf{+ Motifs (TPNet)}      & \mrr{0.829}{0.001} & \na                & \mrr{0.832}{0.001} \\
2 & HyperEvent & \mrr{0.810}{0.002} & \mrr{0.268}{0.004} & \mrr{0.773}{0.002} \\
\rowcolor{gray!12}
  & \textbf{+ Motifs (HyperEvent)} & \mrr{0.819}{0.001} & \mrr{0.275}{0.003} & \mrr{0.772}{0.001} \\
3 & DyGFormer  & \mrr{0.798}{0.004} & \mrr{0.224}{0.015} & \mrr{0.752}{0.004} \\
\rowcolor{gray!12}
  & \textbf{+ Motifs (DyGFormer)}  & \mrr{0.828}{0.003} & \textbf{\mrr{0.413}{0.002}} & \mrr{0.819}{0.002} \\
4 & TNCN       & \mrr{0.718}{0.001} & \mrr{0.377}{0.010} & \mrr{0.762}{0.004} \\
\rowcolor{gray!12}
  & \textbf{+ Motifs (TNCN)}       & \mrr{0.736}{0.002} & \mrr{0.424}{0.003} & \mrr{0.772}{0.003} \\
5 & TGN        & \na                & \mrr{0.349}{0.020} & \na                \\
\rowcolor{gray!12}
  & \textbf{+ Motifs (TGN)}        & \na                & \mrr{0.418}{0.016} & \na                \\
\bottomrule
\end{tabular}
\end{table}

\vspace{-0.5em}
\paragraph{Where does the gain come from?}
The pattern of lifts varies systematically: DyGFormer on review shows the largest gain ($0.224\!\to\!0.413$); TPNet/HyperEvent on coin are within one standard deviation of zero; wiki gains are positive but modest ($+0.01$ to $+0.04$ MRR). Two pieces of evidence connect this pattern to the motif structure of Section~\ref{sec:method} rather than to extra capacity or weak baselines: (i) a candidate-type analysis (Appendix~\ref{app:candidate_split}) shows the lift concentrates on repeated-pair and triadic-witness candidates---exactly the populations with non-zero motif coordinates; and (ii) the controls of Section~\ref{ssec:controls} rule out extra-capacity and GNN-redundancy explanations, with the per-dataset choice of dominant sub-control tracking the $A_1$/$A_2$/$A_3$ signature of Section~\ref{sec:characterization}.

\vspace{-0.5em}
\subsection{Edge classification}
\label{ssec:edge_class}
\vspace{-0.5em}

Tables~\ref{tab:bitcoin-edge-class}--\ref{tab:mooc-edge-class} show the effect of motif augmentation on Bitcoin Alpha/OTC trust classification and MOOC interaction classification, two settings where short-horizon structure is the dominant predictive signal. We carry over the static encoders from \citep{kipf2017semi,hamilton2017inductive,velickovic2018graph} and add TGN \citep{rossi2020temporal} as a strong temporal baseline. Bitcoin Alpha/OTC are framed as a sanity check (the underlying networks are small, and the task is largely whole-history reciprocity prediction); MOOC, with positive-class prevalence $\sim 1\%$, is the more demanding test. PaySim is treated as a noisy stress test and is reported in Appendix~\ref{app:paysim} together with extra-seed runs and a discussion of why its standard deviations exceed the augmentation effect for SAGE/GCN/GAT.

\begin{table}[t]
\centering
\caption{Bitcoin Alpha/OTC edge classification: predict whether a directed candidate edge $(u,v,t)$ in the trust-rating stream carries a positive (trusted) rather than negative (distrusted) label, given only past events. PR-AUC is the headline metric (positive class is the minority); ROC-AUC reported for context. Mean$\pm$std across three seeds.}
\label{tab:bitcoin-edge-class}
\small
\setlength{\tabcolsep}{5pt}
\renewcommand{\arraystretch}{1.1}
\begin{tabular}{c l c c}
\toprule
 & \textbf{Model} & \textbf{PR-AUC} & \textbf{ROC-AUC} \\
\midrule
\multirow{6}{*}{\rotatebox{90}{Alpha}}
 & SAGE             & \valAlphaSAGEPR & \valAlphaSAGEROC \\
 & GCN              & \valAlphaGCNPR  & \valAlphaGCNROC  \\
 & GAT              & \valAlphaGATPR  & \valAlphaGATROC  \\
 & TGN              & \valAlphaTGNPR  & \valAlphaTGNROC  \\
\rowcolor{gray!12}
 & \textbf{+ Motifs (SAGE)} & \textbf{\valAlphaMotifsSAGEPR} & \textbf{\valAlphaMotifsSAGEROC} \\
\rowcolor{gray!12}
 & \textbf{+ Motifs (TGN)}  & \textbf{\valAlphaMotifsTGNPR}  & \textbf{\valAlphaMotifsTGNROC}  \\
\cmidrule(lr){2-4}
\multirow{6}{*}{\rotatebox{90}{OTC}}
 & SAGE             & \valOTCSAGEPR & \valOTCSAGEROC \\
 & GCN              & \valOTCGCNPR  & \valOTCGCNROC  \\
 & GAT              & \valOTCGATPR  & \valOTCGATROC  \\
 & TGN              & \valOTCTGNPR  & \valOTCTGNROC  \\
\rowcolor{gray!12}
 & \textbf{+ Motifs (SAGE)} & \textbf{\valOTCMotifsSAGEPR} & \textbf{\valOTCMotifsSAGEROC} \\
\rowcolor{gray!12}
 & \textbf{+ Motifs (TGN)}  & \textbf{\valOTCMotifsTGNPR}  & \textbf{\valOTCMotifsTGNROC}  \\
\bottomrule
\end{tabular}
\end{table}

\begin{table}[t]
\centering
\caption{MOOC edge classification: predict, on a continuous-time student--item interaction stream, whether each interaction is followed by the rare ($\sim\!1\%$) ``drop-out'' event class. PR-AUC is the headline metric (PR-AUC is the area under the precision--recall curve and is strict on rare-positive tasks); ROC-AUC reported for context. Motif augmentation produces a large absolute and relative PR-AUC lift on every backbone in this table. Mean$\pm$std across three seeds.}
\label{tab:mooc-edge-class}
\small
\setlength{\tabcolsep}{5pt}
\renewcommand{\arraystretch}{1.1}
\begin{tabular}{lcc}
\toprule
\textbf{Model} & \textbf{PR-AUC} & \textbf{ROC-AUC} \\
\midrule
SAGE             & \valMOOCSAGEPR & \valMOOCSAGEROC \\
GCN              & \valMOOCGCNPR  & \valMOOCGCNROC  \\
GAT              & \valMOOCGATPR  & \valMOOCGATROC  \\
TGN              & \valMOOCTGNPR  & \valMOOCTGNROC  \\
\rowcolor{gray!12}
\textbf{+ Motifs (SAGE)} & \textbf{\valMOOCMotifsSAGEPR} & \textbf{\valMOOCMotifsSAGEROC} \\
\rowcolor{gray!12}
\textbf{+ Motifs (TGN)}  & \textbf{\valMOOCMotifsTGNPR}  & \textbf{\valMOOCMotifsTGNROC}  \\
\bottomrule
\end{tabular}
\end{table}

The size of these lifts---modest on Bitcoin Alpha/OTC, large on MOOC---tracks the per-dataset signatures of Section~\ref{sec:characterization}: trust streams already encode whole-history reciprocity in their static structure, while MOOC's short-horizon star pattern is precisely what the static GNN baseline is missing.

\vspace{-0.5em}
\subsection{Graph-level classification on synthetic temporal generators}
\label{ssec:graph_class}
\vspace{-0.5em}

As corroborating evidence that the same $13$-coordinate motif map carries useful signal at the graph level, we evaluate motif augmentation on a synthetic graph-classification task with six temporal generator families (Erd\H{o}s--R\'enyi \citep{erdos1960evolution}, Barab\'asi--Albert \citep{barabasi1999emergence}, Watts--Strogatz \citep{watts1998collective}, configuration model \citep{newman2001random}, stochastic block model, and a rewiring control). Augmentation concatenates the mean of $z_{\mathrm{motif}}$ across graph edges to the graph-level readout. Across five seeds it lifts SAGE from $\sim\!59\%$ to $\sim\!92\%$ accuracy, GCN from $\sim\!73\%$ to $\sim\!93\%$, and most strikingly GAT from $16.7\%$ (majority-class collapse) to $94\%$---without architectural changes; raw-motif RF/SVM baselines and the full table appear in Appendix~\ref{app:graph_class_main}. Distinguishing six generator families is a relatively easy task whenever any short-horizon structural axis is informative, so we treat this as supporting evidence rather than a separate empirical claim.

\vspace{-0.5em}
\subsection{Ablations}
\label{ssec:ablations}
\vspace{-0.5em}

A per-motif leave-one-out sweep (Appendix~\ref{app:ablations_loo}) zeroes out each of the $13$ coordinates in turn; the largest single-coordinate impact is the MOOC $\texttt{star\_u\_out}$ effect of Section~\ref{sec:toy} ($\sim$$-69\%$ relative PR-AUC), with smaller dataset-specific drops elsewhere. A per-family ablation on the SAGE backbone (Appendix~\ref{app:family_ablation}) sweeps $(A_1,A_2,A_3)$ and their pairwise combinations: on Alpha, $A_1$ alone reaches \valFamAlphaAOne, essentially equal to All (\valFamAlphaAAll), consistent with the trust signature being $A_1$-dominated; on MOOC, $A_2$ alone (\valFamMOOCATwo) recovers most of All (\valFamMOOCAAll), with the residual closed by adding $A_1$. The window radius $\Delta$ has a clear sweet spot per dataset (too small undercounts triadic features; too large saturates around hubs), and replacing hard counts by exponentially-decayed counts $\kappa(\delta)=e^{-\delta/\tau}$ matches or marginally improves them; details, plots, and the PaySim row are in Appendix~\ref{app:family_ablation}, \ref{app:soft_vs_hard}, and \ref{app:paysim}.

\vspace{-0.5em}
\subsection{Controls}
\label{ssec:controls}
\vspace{-0.5em}

We rule out two classes of skeptical explanations of the motif-induced gain in Section~\ref{sec:tgb}---extra capacity and GNN redundancy---and use two further sub-controls (activity-only, recency-only) to localize, dataset by dataset, which motif sub-family the gain is concentrated in.

\vspace{-0.5em}
\paragraph{Random-feature and motif-only controls.}
We run two controls on the SAGE backbone for the edge-classification datasets (Table~\ref{tab:motif_only_random}). The \emph{random-feature} control replaces $h(u,v,t)$ by a deterministic per-candidate $13$-dimensional Gaussian whose per-coordinate mean and standard deviation are matched to the real motif histogram, with the rest of the pipeline (motif embedding $M$, head MLP, encoder) held fixed; matching motif augmentation would mean the lift is from extra capacity. The \emph{motif-only} head MLP strips the GNN and feeds $h(u,v,t)$ directly into a $13\!\to\!64\!\to\!1$ head; matching motif augmentation would mean the GNN is essentially redundant. The random-feature control matches the no-motif baseline within one standard deviation on Alpha and MOOC (\valRandomFeatAlpha\ vs \valAlphaSAGEPR; \valRandomFeatMOOC\ vs \valMOOCSAGEPR), and on PaySim falls within the seed-to-seed instability band that the baseline already shows on this stress-test dataset; the motif-only head recovers a substantial fraction of the motif augmentation lift on Alpha (\valMotifOnlyAlpha) and MOOC (\valMotifOnlyMOOC) but collapses on PaySim (\valMotifOnlyPaySim), consistent with the dataset-level $A_3$ near-vanishing on PaySim (Appendix~\ref{app:paysim}). A separate motif-baseline comparison (global counts vs.\ static-only anchoring vs.\ our temporal anchoring; Appendix~\ref{app:motif_baselines}) shows static anchoring competitive on Alpha and temporal anchoring strictly best on MOOC, matching the per-dataset signatures of Section~\ref{sec:characterization}.

\vspace{-0.5em}
\paragraph{Activity-only and recency-only controls.}
A further skeptical hypothesis is that motif augmentation is a complicated way to expose simple node-activity counts (degree in the past window) or pair recency to the head. We replace $h(u,v,t)$ by two strict subsets and re-run with the same head: an \emph{activity-only} control that exposes only the four $A_2$ scalar counts, and a \emph{recency-only} control that exposes only \texttt{m2\_ba\_since\_last}. The two edge-classification datasets give complementary readings, both consistent with the per-dataset signature of Section~\ref{sec:characterization}: on Alpha (whole-history reciprocity) recency-only alone matches motif augmentation within one standard deviation while activity-only is below; on MOOC (short-horizon star diversity) activity-only captures the bulk of the lift but motif augmentation adds a measurable margin on top, while recency-only is at the no-motif baseline (Appendix~\ref{app:activity_recency_controls}). The lift is therefore not a single rebundled scalar---different sub-controls dominate on different datasets.

\vspace{-0.5em}
\paragraph{Overhead and interpretability.}
Motif extraction adds a one-shot preprocessing cost that is small relative to a single training epoch on TGB-scale streams; per-epoch training overhead and memory are well within budget (Appendix~\ref{app:overhead}). The learned linear embedding $M$ recovers the three-axis structure of Section~\ref{sec:characterization} from the trained model: trust networks concentrate mass on $A_1$ and MOOC on $A_2,A_3$, mirroring the data-side signatures (Appendix~\ref{app:interpretability}).

\begin{table}[t]
\centering
\caption{Random-feature and motif-only controls on the SAGE backbone (PR-AUC, mean$\pm$std across three seeds). ``Random'' replaces $h$ by moment-matched Gaussian features in the full pipeline; ``motif-only'' is a $13\!\to\!64\!\to\!1$ MLP on $h(u,v,t)$ with no GNN. Tasks are the edge-classification tasks of Section~\ref{ssec:edge_class} (Bitcoin Alpha trust prediction, MOOC drop-out prediction) and the PaySim fraud-detection stress test (Appendix~\ref{app:paysim}).}
\label{tab:motif_only_random}
\small
\begin{tabular}{l ccc}
\toprule
Task & Baseline (no motifs) & + Random features & + Motif-only MLP \\
\midrule
Alpha (PR-AUC)            & \valAlphaSAGEPR  & \valRandomFeatAlpha  & \valMotifOnlyAlpha \\
MOOC (PR-AUC)             & \valMOOCSAGEPR   & \valRandomFeatMOOC   & \valMotifOnlyMOOC \\
PaySim (PR-AUC)           & \valPaySimSAGEPR & \valRandomFeatPaySim & \valMotifOnlyPaySim \\
\bottomrule
\end{tabular}
\end{table}

\vspace{-0.5em}
\section{Related work}
\label{sec:related}
\vspace{-0.5em}

Prior work on temporal motifs and GNNs leaves three gaps that this paper directly targets.

\vspace{-0.5em}
\paragraph{Gap 1: Diagnostics of temporal motifs in real data.} Network motifs \citep{milo2002network,alon2007network,benson2016higher} and their $\delta$-temporal extension \citep{paranjape2017motifs} have been used as descriptive statistics or fed into downstream prediction \citep{qiu2021temporal,liu2025motifs} and post-hoc TGNN explanation \citep{chen2023tempme}, but typically on a handful of datasets and without a cross-domain account of which patterns are concentrated where. Section~\ref{sec:characterization} closes this gap with a diagnostic isolating three scale-stable axes ($A_1,A_2,A_3$) that can predict where augmentation helps.

\vspace{-0.5em}
\paragraph{Gap 2: Theory for temporal motif-based GNNs.} Motif-aware GNN architectures \citep{monti2018motifnet,peng2018graph,lee2019graph,chen2023motif,sheng2024mgats} modify the encoder but rarely come with WL-style placements, and the standard subgraph-count augmentation theory \citep{xu2018powerful,morris2019weisfeiler,bouritsas2022improving,frasca2022understanding} is static. \citet{souza2022provably} bound temporal MP-GNNs but do not isolate features that could lift them off that bound. Section~\ref{sec:theory} sharpens this with a candidate-anchored temporal-WL hierarchy, an explicit witness pair on which motif features separate, and a motif-order/WL-hierarchy correspondence (Appendix~\ref{app:theory}).

\vspace{-0.5em}
\paragraph{Gap 3: When do temporal motifs help?} Existing motif-augmented temporal models \citep{qiu2021temporal,liu2025motifs} report aggregate gains without explaining per-dataset variation, and they typically modify the encoder. We instead treat the motif-induced gain as the headline quantity, measured per baseline$\times$dataset pair across five TGNN baselines \citep{rossi2020temporal,yu2023towards,yu2024temporalwalk,gao2025hyperevent} and three task families (edge classification, TGB link prediction, graph-level classification), and connect its size to the $A_1$/$A_2$/$A_3$ signature of Section~\ref{sec:characterization}. The augmentation itself is encoder-agnostic---a per-candidate feature concatenated to any base encoder \citep{trivedi2018representation,kumar2019predicting,xu2020inductive,wang2021inductive,cong2023graphmixer}, and never an architectural change. Appendix~\ref{app:extended_related} gives an extended discussion.

\vspace{-0.5em}
\section{Limitations, broader impact, and conclusion}
\label{sec:limitations}
\vspace{-0.5em}

\textit{Limitations.} (i) The window radius $\Delta$ is a sensitive hyperparameter (Appendix~\ref{app:family_ablation}); multi-scale variants (Appendix~\ref{app:protocol}) trade parameters for robustness and are needed for dependencies beyond $[t-\Delta,t)$. (ii) In sparse streams where windows rarely contain more than one incident event, the triadic-flow $A_3$ coordinates of $h$ identically vanish and the augmentation reduces to its $A_1$+$A_2$ subspace; this is a regime where one would not expect motif augmentation to help, and is consistent with the small or zero gains observed on \texttt{tgbl-coin-v2}/TPNet and (in the appendix) on PaySim. (iii) Lemma~\ref{lem:mpgnn_bounded} assumes a finite-bucket or bounded-Lipschitz time kernel (Appendix~\ref{app:proof_mpgnn_bounded}). (iv) Performance on extreme-frequency streams (e.g., packet-level traces) is open. (v) The motif map $h$ is fixed at order $\le 3$ vertices and a single window $\Delta$; a natural extension increases the motif order or stacks $h$ at multiple $\Delta$s, climbs the temporal-WL hierarchy of Appendix~\ref{app:theory}, and is most likely to help on streams whose dominant predictive structure is multi-scale or higher-order.

\textit{Broader impact.} Temporal motif features suit fraud and anomaly detection, trust and reputation, and transaction monitoring, but by the same token enable deanonymization and interaction-microstructure surveillance; deployments in sensitive domains should pair them with aggregation thresholds, timestamp-level differential privacy, and explicit consent frameworks.

\textit{Conclusion.} Starting from a worked MOOC example and an empirical characterization of motif activity across $13$ temporal datasets, we operationalized the resulting three axes as a $13$-dimensional motif feature family, placed it at the first level of a candidate-anchored temporal Weisfeiler-Leman hierarchy, and measured the motif-induced gain on top of strong TGNN baselines across edge classification, TGB link prediction, and graph-level tasks. The gain is dataset-dependent: large where short-horizon motif structure is part of the predictive signal (e.g., DyGFormer's $0.224\!\to\!0.413$ MRR lift on \texttt{tgbl-review-v2}), modest on \texttt{tgbl-wiki-v2}, and within one standard deviation of zero where the baseline already captures the relevant short-horizon coordinates. The resulting framework is architecture-agnostic, interpretable, and consistently improves downstream performance.


\begin{ack}
We thank the Oxford-Man Institute of Quantitative Finance for the financial support of the work behind this paper.
\end{ack}

\bibliographystyle{plainnat}
\bibliography{references}

\newpage
\appendix
%
\section{Diagnostic supplements}
\label{app:diagnostic_supp}

This appendix complements Section~\ref{sec:characterization} of the main text with two extra views of the per-candidate motif signatures: a UMAP projection of per-edge motif vectors, and a quantitative scale-stability check.

\subsection{Extra signature analyses}
\label{app:extra_signatures}

Figure~\ref{fig:umap} projects per-edge motif vectors $h(u,v,t)$ from each dataset onto two dimensions; we use UMAP where available and fall back to PCA on machines without \texttt{umap-learn}. Real and synthetic streams form clearly separated clusters, complementing the per-domain mean signatures of Fig.~\ref{fig:signatures} and the static- vs.\ temporal-anchored comparison of Table~\ref{tab:motif_baselines}.

\begin{figure}[ht]
  \centering
  \includegraphics[width=0.7\linewidth]{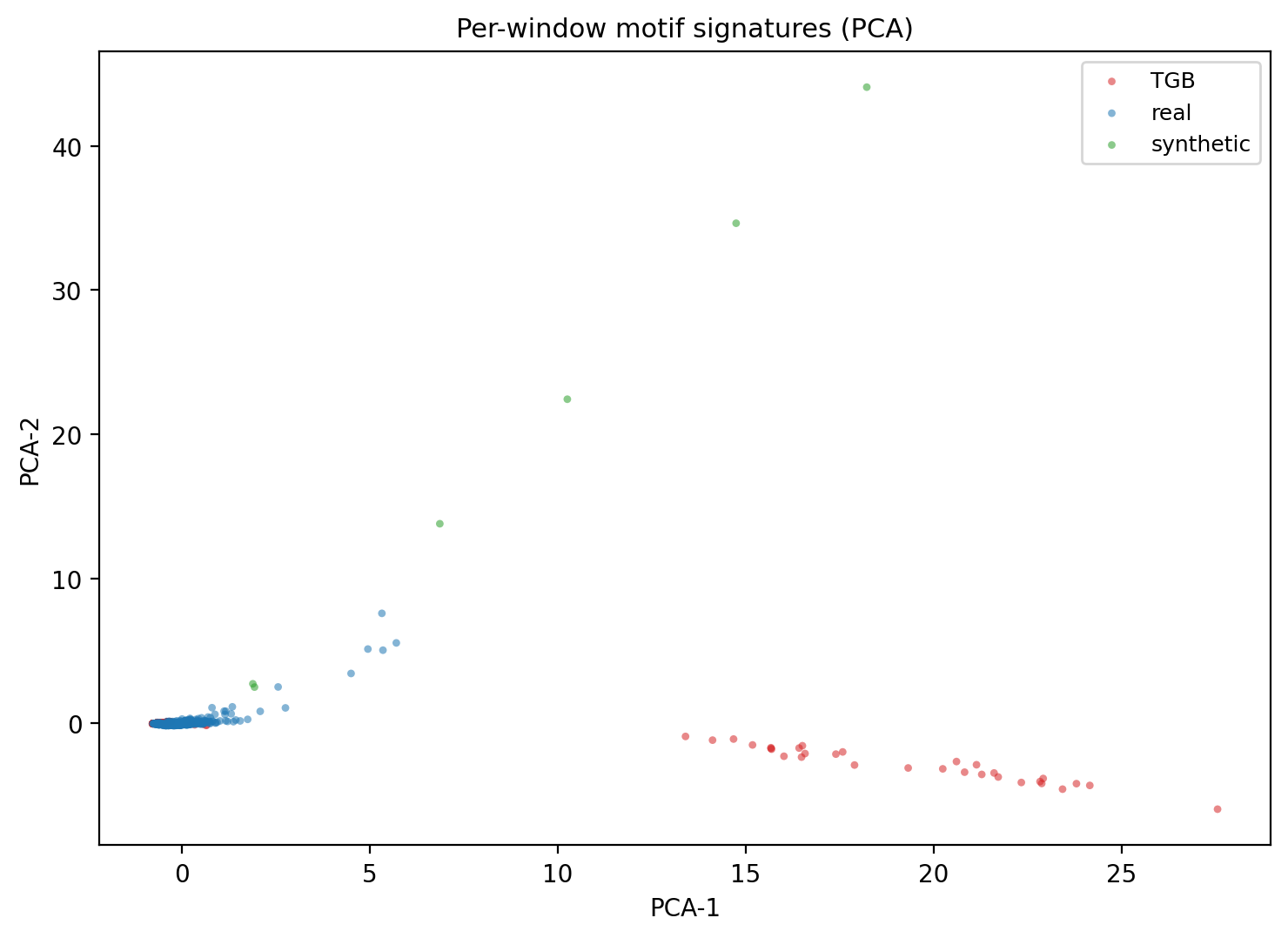}
  \caption{Two-dimensional projection of per-edge motif vectors $h(u,v,t)$, colored by dataset. Temporal-anchored signatures cluster tightly within a domain and separate cleanly across domains.}
  \label{fig:umap}
\end{figure}

\subsection{Stability of motif signatures across network scale}
\label{app:scale_stability}

A useful feature family must be stable as the same generating process produces streams of different sizes. Figure~\ref{fig:stability} compares signatures on the small ($10^3$ nodes, $\sim\!10^4$ events) and large ($10^4$ nodes, $\sim\!10^5$ events) variants of each synthetic generator and on temporally-subsampled versions of the real datasets, reporting the per-coordinate discrepancy normalized by per-coordinate variance. Most coordinates are substantially more stable under scale perturbations than across domains; the dyadic count coordinates are the least stable and are log-normalized in the formal map (Section~\ref{sec:method}).

\begin{figure}[ht]
  \centering
  \includegraphics[width=0.85\linewidth]{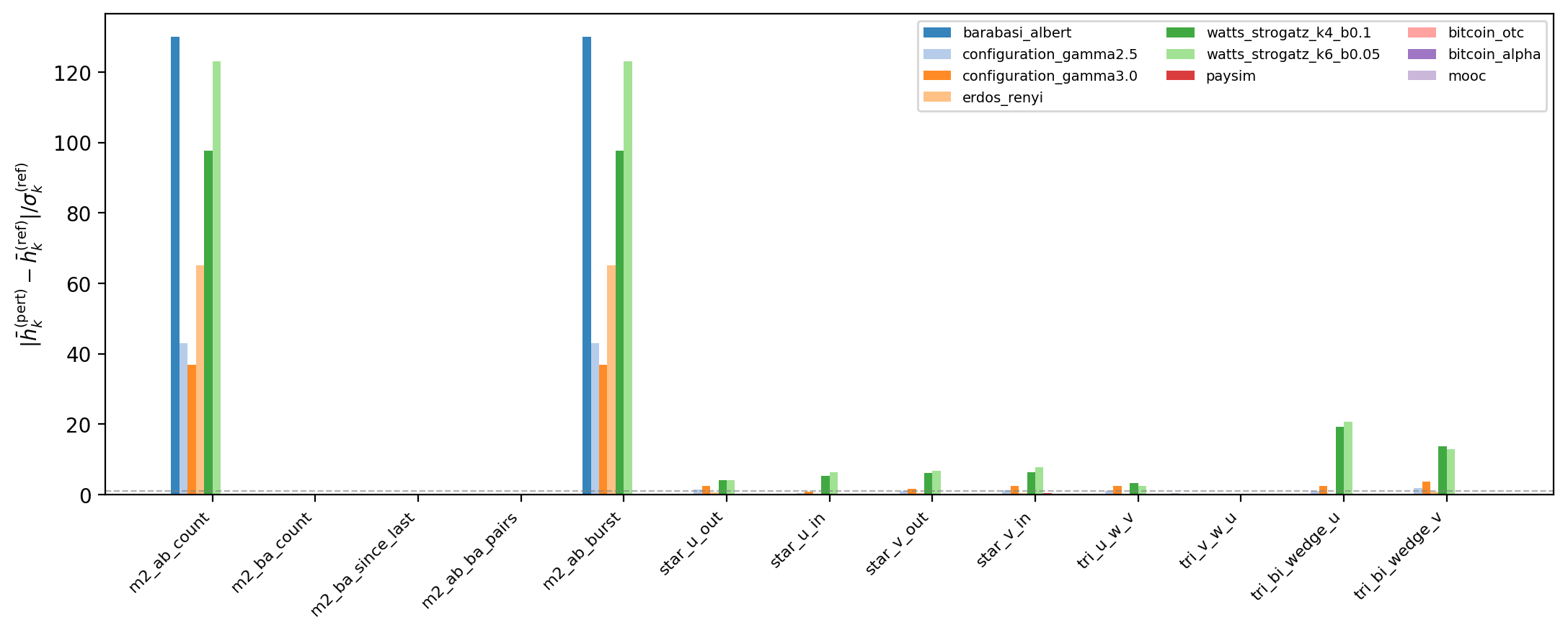}
  \caption{Motif signatures are scale-stable for most coordinates. Each bar reports the per-coordinate scale-pair discrepancy $|\bar c_k^{\mathrm{ref}}-\bar c_k^{\mathrm{pert}}|$ normalised by the within-stream standard deviation $\sigma_k^{\mathrm{ref}}$ of the reference signature, for paired (small$\leftrightarrow$large) synthetic generators and (full$\leftrightarrow$50\% subsample) real datasets.}
  \label{fig:stability}
\end{figure}

\section{Full motif definitions and computation}
\label{app:motif_defs}

This appendix gives the complete, implementation-ready definitions of the feature map $h(u,v,t)\in\mathbb{R}^{13}$ used in the main text. We write $\mathcal{W}=\mathcal{W}^{\mathrm{past}}_t(\Delta)$ for brevity and denote the stream by $\mathcal{D}=\{(u_i,v_i,t_i,x_i,y_i)\}_{i=1}^N$.

\subsection{\texorpdfstring{Dyadic recency and reciprocity (axis $A_1$, 5 features)}{Dyadic recency and reciprocity (axis A1, 5 features)}}

\begin{align}
  \texttt{m2\_ab\_count}
    &= \bigl|\{\,i\in\mathcal{W} : u_i=u,\,v_i=v\,\}\bigr|, \\
  \texttt{m2\_ba\_count}
    &= \bigl|\{\,i\in\mathcal{W} : u_i=v,\,v_i=u\,\}\bigr|, \\
  \texttt{m2\_ab\_ba\_pairs}
    &= \min\bigl(\texttt{m2\_ab\_count},\,\texttt{m2\_ba\_count}\bigr), \\
  \texttt{m2\_ab\_burst}
    &= \max\bigl\{0,\,\texttt{m2\_ab\_count}-1\bigr\}, \\
  \texttt{m2\_ba\_since\_last}
    &= \begin{cases}
      \dfrac{t-\max\{t_i:i\in\mathcal{W},u_i=v,v_i=u\}}{\Delta}, & \text{if the set is non-empty,} \\[0.6em]
      1, & \text{otherwise.}
      \end{cases}
\end{align}
\texttt{m2\_ba\_since\_last}$\in[0,1]$ with values near $0$ indicating a very recent reciprocation and values near $1$ indicating either no reciprocation in $\mathcal{W}$ or a very old one.

\subsection{\texorpdfstring{Star diversity (axis $A_2$, 4 features)}{Star diversity (axis A2, 4 features)}}

For $a\in\{u,v\}$ with counterpart $b\in\{u,v\}\setminus\{a\}$ we define
\begin{align}
  \mathcal{N}^{\mathrm{out}}_a
    &= \bigl\{\,w\in V\setminus\{b\} :
        \exists i\in\mathcal{W},\,u_i=a,\,v_i=w\,\bigr\}, \\
  \mathcal{N}^{\mathrm{in}}_a
    &= \bigl\{\,w\in V\setminus\{b\} :
        \exists i\in\mathcal{W},\,u_i=w,\,v_i=a\,\bigr\}.
\end{align}
The star features are $\texttt{star\_u\_out}=|\mathcal{N}^{\mathrm{out}}_u|$, $\texttt{star\_u\_in}=|\mathcal{N}^{\mathrm{in}}_u|$, $\texttt{star\_v\_out}=|\mathcal{N}^{\mathrm{out}}_v|$, $\texttt{star\_v\_in}=|\mathcal{N}^{\mathrm{in}}_v|$.

\subsection{\texorpdfstring{Triadic flow (axis $A_3$, 4 features)}{Triadic flow (axis A3, 4 features)}}

\begin{align}
  \texttt{tri\_u\_w\_v}
    &= \bigl|\mathcal{N}^{\mathrm{out}}_u\cap\mathcal{N}^{\mathrm{in}}_v\bigr|, \\
  \texttt{tri\_v\_w\_u}
    &= \bigl|\mathcal{N}^{\mathrm{out}}_v\cap\mathcal{N}^{\mathrm{in}}_u\bigr|, \\
  \texttt{tri\_bi\_wedge\_u}
    &= \bigl|\mathcal{N}^{\mathrm{in}}_u\cap\mathcal{N}^{\mathrm{in}}_v\bigr|, \\
  \texttt{tri\_bi\_wedge\_v}
    &= \bigl|\mathcal{N}^{\mathrm{out}}_u\cap\mathcal{N}^{\mathrm{out}}_v\bigr|.
\end{align}

\subsection{Feature-vector assembly and normalization}
Collecting the above 13 coordinates in a fixed order yields $h(u,v,t)\in\mathbb{R}^{13}$. For heavy-tailed count coordinates (all except \texttt{m2\_ba\_since\_last}), we apply the per-coordinate transform $x\mapsto\log_{10}(1+x)$ before passing $h$ to the linear embedding $M$.

\subsection{Neighbor cap and subsampling rule}
To bound the cost of evaluating Axis-$A_2$ and Axis-$A_3$ features in the presence of high-degree hubs, we apply a global distinct-neighbor cap $C\in\mathbb{N}$. If $|\mathcal{N}^{\mathrm{dir}}_a|>C$ for some $a\in\{u,v\}$, $\mathrm{dir}\in\{\text{in},\text{out}\}$, we sample uniformly-at-random a subset of size $C$ without replacement from the events incident on $a$ in $\mathcal{W}$, and recompute the neighbor set from the subset. We default to $C=100$, matching the implementation of \citet{paranjape2017motifs}. The effect of $C\in\{50,100,200\}$ on downstream metrics is reported in Appendix~\ref{app:ablations_loo}.

\subsection{Complexity and streaming-compatible computation}
Given per-node time-sorted event lists \(\mathrm{Out}(a)=\{(t_j,j):u_j=a\}\) and \(\mathrm{In}(a)=\{(t_j,j):v_j=a\}\), a candidate $(u,v,t)$ needs four window slices (one for each of $u$/$v$ $\times$ in/out), each in $O(\log N)$ time via binary search, plus set operations on capped neighbor sets of size at most $C$. The per-candidate cost is therefore $O(\log N + C)$ with constants dominated by binary search and small-set intersections, and $h(u,v,t)$ can be computed online as the stream is processed.

\section{Proofs}
\label{app:theory}

This appendix gives the temporal-$1$-WL definition referenced in Section~\ref{sec:theory} together with the full proofs of the theoretical statements there.

\subsection{Temporal-\texorpdfstring{$1$}{1}-WL color refinement}
\label{app:t1wl_def}

For a stream $\mathcal{D}$, reference time $t$, window radius $\Delta$, and a fixed number of temporal buckets $B\ge 1$, let $\mathsf{buck}_\Delta : [0,\Delta] \to \{1,\dots,B\}$ be a monotone quantization of the time-since-event $\Delta-\delta$.

\begin{definition}[Temporal-$1$-WL]
\label{def:t1wl}
Let $c^{(0)}_v=X_V(v)$ be the initial color of node $v\in V$. For $\ell\ge 1$,
\begin{equation}
c^{(\ell)}_v(t)\;=\;\mathrm{HASH}\!\left(\,c^{(\ell-1)}_v(t),\;\bigl[\!\bigl[\,\bigl(\mathrm{dir},\mathsf{buck}_\Delta(t-t_i),c^{(\ell-1)}_{u'}(t)\bigr)\,\bigr]\!\bigr]_{i\in\mathcal{W}^{\mathrm{past}}_t(\Delta)}\,\right),
\label{eq:t1wl}
\end{equation}
where the multiset ranges over events $i$ incident on $v$ with $u'$ the other endpoint and $\mathrm{dir}\in\{\text{in},\text{out}\}$. For an anchored pointed stream $(\mathcal{D},(u_0,v_0),t)$ the \emph{anchor-pair color} at depth $\ell$ is $C^{(\ell)}=(c^{(\ell)}_{u_0}(t),c^{(\ell)}_{v_0}(t))$. Permutation invariance and the specialization to ordinary $1$-WL when $B=1$ are verified in Appendix~\ref{app:proof_temporal1wl_inv}.
\end{definition}

\subsection{Isomorphism invariance of \texorpdfstring{$h$}{h} and \texorpdfstring{$\Sigma_h$}{Sigma\_h}}
\label{app:proof_invariance}

\begin{lemma}[Isomorphism invariance, anchored and global form]
\label{lem:iso_invariance}
Let $\varphi:V\to V'$ be a bijection and let $\varphi_*\mathcal{D}$ be the stream obtained by relabeling node identities through $\varphi$ while preserving timestamps. Then for every $t$ and every candidate $(u,v,t)$,
\[
h(u,v,t;\mathcal{D})\;=\;h(\varphi(u),\varphi(v),t;\varphi_*\mathcal{D}),\qquad
\Sigma_h(\mathcal{D},t)\;=\;\Sigma_h(\varphi_*\mathcal{D},t).
\]
\end{lemma}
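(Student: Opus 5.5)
The argument unfolds the definitions of Appendix~\ref{app:motif_defs} and checks that every ingredient of $h$ is defined purely in terms of the incidence pattern of window events and their timestamps, never in terms of node names. Fix a bijection $\varphi$ and write $\mathcal{D}'=\varphi_*\mathcal{D}$, so that event $i$ of $\mathcal{D}$ is $(u_i,v_i,t_i,x_i,y_i)$ and event $i$ of $\mathcal{D}'$ is $(\varphi(u_i),\varphi(v_i),t_i,x_i,y_i)$. Because timestamps are untouched, the past-only window $\mathcal{W}^{\mathrm{past}}_t(\Delta)$ is the same index set in both streams. This is the basic observation from which everything else follows.

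\textbf{Dyadic block ($A_1$).} The predicate ``$u_i=u,\,v_i=v$'' holds in $\mathcal{D}$ iff ``$\varphi(u_i)=\varphi(u),\,\varphi(v_i)=\varphi(v)$'' holds in $\mathcal{D}'$, by injectivity of $\varphi$. Hence \texttt{m2\_ab\_count} and \texttt{m2\_ba\_count} agree as cardinalities of identical index sets. The pairs and burst coordinates are functions of these two counts, so they agree as well. For \texttt{m2\_ba\_since\_last}, the maximum is taken over the same index set with the same $t_i$, so it agrees too, including the empty case.

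\textbf{Star and triadic blocks ($A_2$, $A_3$).} Here I would show the set identities $\varphi(\mathcal{N}^{\mathrm{out}}_a(\mathcal{D}))=\mathcal{N}^{\mathrm{out}}_{\varphi(a)}(\mathcal{D}')$ and the analogous identity for in-neighborhoods. These use the exclusion of the counterpart $b$, which is mapped to $\varphi(b)$. Since $\varphi$ is a bijection, it preserves cardinalities, which gives the star counts. It also commutes with intersections, $\varphi(A\cap B)=\varphi(A)\cap\varphi(B)$, which gives the four triadic counts. The final $\log_{10}(1+x)$ normalization is applied coordinatewise, so it preserves equality.

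\textbf{Main obstacle: the neighbor cap.} When a neighborhood exceeds $C$, the subsample is drawn from a generator seeded by a hash of $(u,v,t)$. A literal hash of node identifiers is not relabeling-invariant, so the sampled subsets could differ and the identity would fail for capped candidates. I would resolve this in one of two ways, stating the lemma for whichever the paper intends:
\begin{itemize}
\item Restrict the exact identity to the uncapped regime, i.e.\ $C=\infty$, or all relevant neighborhoods of size at most $C$.
\item Interpret the capped features as the random variable induced by uniform sampling. Then $\varphi$ pushes the uniform distribution on event subsets in $\mathcal{D}$ to the uniform distribution on the corresponding subsets in $\mathcal{D}'$, so $h$ is invariant in distribution. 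Invariance becomes exact if the seed is chosen to depend on the event indices rather than on node labels, since those indices are unchanged by $\varphi$.
\end{itemize}

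\textbf{Global statistic.} $\Sigma_h(\mathcal{D},t)$ is presumably an aggregate (mean or covariance) of $h(u_i,v_i,t_i)$ over events up to $t$. The event-indexed map $i\mapsto h(u_i,v_i,t_i;\mathcal{D})$ coincides termwise with $i\mapsto h(\varphi(u_i),\varphi(v_i),t_i;\mathcal{D}')$ by the anchored form just proved. Both streams share the same index set, so any symmetric aggregate is equal.
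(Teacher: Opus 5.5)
Your proposal is correct and follows the same route as the paper's proof: every coordinate of $h$ is an endpoint-prescribed event count in the (relabeling-invariant) window, a distinct-neighbor cardinality, or an intersection of such sets, each of which commutes with $\varphi$, and $\Sigma_h$ (which the paper takes to be the multiset of $h$ over directed edges rather than your event-indexed aggregate, with the same termwise argument) inherits the invariance. Your caveat about the neighbor cap is a legitimate point the paper's one-line proof does not address: since the subsample is seeded by a hash of $(u,v,t)$, the stated identity holds exactly only in the uncapped regime or with a relabeling-invariant seed, and otherwise only in distribution, as you propose.
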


\begin{proof}
Every coordinate of $h$ is built from (i) counts of events with prescribed directed endpoints within $\mathcal{W}^{\mathrm{past}}_t(\Delta)$, (ii) cardinalities of distinct-neighbor sets $\mathcal{N}^{\mathrm{dir}}_a$, and (iii) intersections of those sets. All three operations commute with bijective relabeling of node identities, so $h(u,v,t)$ evaluated on $\mathcal{D}$ equals $h(\varphi(u),\varphi(v),t)$ evaluated on $\varphi_*\mathcal{D}$. Taking multisets over all directed edges therefore yields identical $\Sigma_h$.
\end{proof}

\begin{remark}[Whole-stream witnesses are impossible]
\label{rem:whole_stream_witness}
A consequence of Lemma~\ref{lem:iso_invariance} is that no pair of streams that are temporally isomorphic via some bijection $\varphi$ can be separated by $\Sigma_h$. In particular, swapping two disconnected dyads in a four-node stream produces a temporally isomorphic stream and so cannot serve as a witness against a whole-stream invariant: the candidate-anchored framing of Theorem~\ref{thm:separation} is the natural one, because the augmentation evaluates $h$ at a specific candidate and the anchor breaks any isomorphism that would otherwise map it onto a different dyad.
\end{remark}

Lemma~\ref{lem:iso_invariance} is the formal statement that justifies using $h$ and $\Sigma_h$ as graph invariants. For anchored pointed streams $(\mathcal{D}_1,(u_0,v_0),t_1)$ and $(\mathcal{D}_2,(u_0',v_0'),t_2)$, the appropriate notion of isomorphism is a bijection $\varphi$ with $\varphi(u_0)=u_0'$ and $\varphi(v_0)=v_0'$; if no such anchor-preserving isomorphism exists, the pair is genuinely non-isomorphic in the sense of Theorem~\ref{thm:separation}(1), and a separating $h$-value is meaningful.

\subsection{\texorpdfstring{Temporal-$1$-WL refinement is isomorphism-invariant}{Temporal-1-WL refinement is isomorphism-invariant}}
\label{app:proof_temporal1wl_inv}

Definition~\ref{def:t1wl} uses a hash of multisets of time-bucketed neighbor colors. We verify invariance under bijective relabeling.

\begin{proposition}
\label{prop:t1wl_invariance}
If $\varphi:V\to V'$ is a bijection and $(\mathcal{D},t)$ is a pointed stream, then for every $\ell\ge 0$ and every $v\in V$,
\[
c^{(\ell)}_v(t;\mathcal{D}) = c^{(\ell)}_{\varphi(v)}(t;\varphi_*\mathcal{D}).
\]
In particular, the anchor-pair color satisfies
\[
C^{(\ell)}\bigl(\mathcal{D},(u_0,v_0),t\bigr)\;=\;C^{(\ell)}\bigl(\varphi_*\mathcal{D},(\varphi(u_0),\varphi(v_0)),t\bigr).
\]
\end{proposition}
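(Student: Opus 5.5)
The plan is induction on the refinement depth $\ell$, with $v\in V$ arbitrary. First I would fix the conventions that make the statement meaningful.

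The relabeled stream $\varphi_*\mathcal{D}$ consists of the events $(\varphi(u_i),\varphi(v_i),t_i,x_i,y_i)$, indexed by the same $i$. The initial node colors transport along $\varphi$, that is, $X_{V'}(\varphi(v))=X_V(v)$. This is the implicit meaning of "relabeling node identities" in Lemma~\ref{lem:iso_invariance}; without it the claim fails already at $\ell=0$. With this convention the base case $c^{(0)}_v=c^{(0)}_{\varphi(v)}$ is immediate.

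The key structural observation concerns the past-only window. Timestamps are preserved under relabeling, so the index set $\mathcal{W}^{\mathrm{past}}_t(\Delta)$ is literally identical for $\mathcal{D}$ and $\varphi_*\mathcal{D}$. Moreover, event $i$ is incident on $v$ in $\mathcal{D}$ if and only if it is incident on $\varphi(v)$ in $\varphi_*\mathcal{D}$, and it has the same direction and the same bucket $\mathsf{buck}_\Delta(t-t_i)$. Its other endpoint $u'$ maps to $\varphi(u')$. So the incidence structure carries over event by event.

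For the inductive step, assume $c^{(\ell-1)}_w(t;\mathcal{D})=c^{(\ell-1)}_{\varphi(w)}(t;\varphi_*\mathcal{D})$ for all $w\in V$. The identity map on indices $i$ is then a bijection between the events indexing the multiset in \eqref{eq:t1wl} for $v$ and those for $\varphi(v)$. Under this bijection each triple $(\mathrm{dir},\mathsf{buck}_\Delta(t-t_i),c^{(\ell-1)}_{u'})$ is sent to an equal triple. Equal triples under a bijection of index sets give equal multisets. The first argument of the hash, $c^{(\ell-1)}_v$, also agrees by the inductive hypothesis. Since $\mathrm{HASH}$ is a fixed function used identically on both streams, the outputs agree. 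The anchor-pair statement then follows by applying the node-level claim to $u_0$ and to $v_0$.

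I expect no step to be a genuine obstacle. The only delicate point is bookkeeping: the multiset must be indexed by events rather than by neighbors, so that multi-edges and repeated interactions are counted correctly. Indexing by events is what makes the identity on indices the correct bijection.

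I would close with a remark on the specialization to ordinary $1$-WL that Definition~\ref{def:t1wl} promises here. With $B=1$ the bucket label is constant, so it can be dropped from every triple. Equation~\eqref{eq:t1wl} then reduces to directed $1$-WL on the windowed multigraph.
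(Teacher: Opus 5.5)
Your proposal is correct and follows the paper's proof essentially step for step: induction on $\ell$, a base case from the color-transport convention $X_{V'}\circ\varphi=X_V$ (the paper writes $X_V\circ\varphi^{-1}=X_{V'}$), and an inductive step in which the event-indexed multiset of triples $(\mathrm{dir},\mathsf{buck}_\Delta(t-t_i),c^{(\ell-1)}_{u'})$ is preserved, because relabeling fixes directions and timestamps and the induction hypothesis transports neighbor colors. You spell out two details the paper leaves implicit, namely agreement of the self-color argument of $\mathrm{HASH}$ and the use of the identity on event indices as the matching bijection, but the argument is the same.
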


\begin{proof}
By induction on $\ell$. The base case is direct from $c^{(0)}_v=X_V(v)$ and $X_V\circ\varphi^{-1}=X_{V'}$. For the inductive step, the multiset argument of $\mathrm{HASH}$ in \eqref{eq:t1wl} ranges over events $i\in\mathcal{W}^{\mathrm{past}}_t(\Delta)$ incident on $v$; each contributes a tuple $(\mathrm{dir},\mathsf{buck}_\Delta(t-t_i),c^{(\ell-1)}_{u'}(t))$. Relabeling preserves $\mathrm{dir}$ and $t_i$ and, by the induction hypothesis, the colors $c^{(\ell-1)}_{u'}$. Hence the multiset is preserved, and so is its hash. The anchor-pair statement follows by reading off the colors at $(u_0,v_0)$ and $(\varphi(u_0),\varphi(v_0))$ respectively.
\end{proof}

\subsection{\texorpdfstring{Temporal MP-GNNs are bounded by temporal-$1$-WL}{Temporal MP-GNNs are bounded by temporal-1-WL}}
\label{app:proof_mpgnn_bounded}

We formalize Lemma~\ref{lem:mpgnn_bounded}. Fix a time-encoding $\tau:[0,\Delta]\to\mathbb{R}^{d_\tau}$ that factors through a bucketization $\mathsf{buck}_\Delta$ with $B$ buckets (this is satisfied by TGAT's learned time encoding composed with any discretization, by TGN's time features with any quantizer, and by DyGFormer's attention bucketization). Consider a temporal MP-GNN with per-layer update
\begin{align}
\label{eq:tmpgnn_update}
h^{(\ell)}_v(t)
 &= \psi^{(\ell)}\!\Bigl(h^{(\ell-1)}_v(t),\;
   \mathrm{AGG}\,\mathcal{M}^{(\ell)}_v(t)\Bigr), \\
\mathcal{M}^{(\ell)}_v(t)
 &= \bigl\{\phi^{(\ell)}\bigl(h^{(\ell-1)}_v(t),h^{(\ell-1)}_{u'}(t),x_{vu'},\tau(t-t_i)\bigr)
   : i\in\mathcal{W}^{\mathrm{past}}_t(\Delta),\;\text{$i$ incident on $v$}\bigr\}, \notag
\end{align}
where $\mathrm{AGG}$ is permutation-invariant and $\phi^{(\ell)},\psi^{(\ell)}$ are measurable functions.

\begin{theorem}[Edge scorer bounded by temporal-$1$-WL]
\label{thm:bounded_t1wl}
Let $f$ be the composition of $T$ layers of the form \eqref{eq:tmpgnn_update} followed by an edge-readout $\rho(h_{u_0}^{(T)}(t),h_{v_0}^{(T)}(t))$ acting only on the candidate endpoints. If temporal-$1$-WL with $B$ buckets assigns the same anchor-pair color $C^{(T)}$ to two anchored pointed streams, then $f$ assigns them the same edge score.
\end{theorem}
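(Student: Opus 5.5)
The plan is the standard Xu et al./Morris et al. induction, run on the union of the two anchored streams. Place $(\mathcal{D}_1,t_1)$ and $(\mathcal{D}_2,t_2)$ side by side as one disjoint stream, with a single temporal-$1$-WL refinement applied to both. We then prove, by induction on $\ell$, the following invariant: for any nodes $a$ in $\mathcal{D}_1$ and $b$ in $\mathcal{D}_2$ (or both in the same stream), $c^{(\ell)}_a=c^{(\ell)}_b$ implies $h^{(\ell)}_a=h^{(\ell)}_b$. Once this holds at $\ell=T$, the anchor-pair hypothesis $C^{(T)}_1=C^{(T)}_2$ gives $h^{(T)}_{u_0}=h^{(T)}_{u_0'}$ and $h^{(T)}_{v_0}=h^{(T)}_{v_0'}$. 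Applying $\rho$ then yields equal scores.

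\textbf{Base case.} This requires the initial node states $h^{(0)}$ to be a function of $c^{(0)}=X_V$. We take this as part of the abstraction: the initial states are node features.

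\textbf{Inductive step.} Fix $a,b$ with $c^{(\ell)}_a=c^{(\ell)}_b$. Because $\mathrm{HASH}$ is injective, two things follow. First, $c^{(\ell-1)}_a=c^{(\ell-1)}_b$, so the induction hypothesis gives $h^{(\ell-1)}_a=h^{(\ell-1)}_b$. Second, the multisets $[\![(\mathrm{dir},\mathsf{buck}_\Delta(t-t_i),c^{(\ell-1)}_{u'})]\!]$ over incident window events coincide, so there is a bijection between the incident events of $a$ and those of $b$ that preserves each triple. Under this bijection:
\begin{itemize}
\item the induction hypothesis converts equal $c^{(\ell-1)}_{u'}$ into equal $h^{(\ell-1)}_{u'}$;
\item the assumption that $\tau$ factors through $\mathsf{buck}_\Delta$ converts equal buckets into equal $\tau(t-t_i)$.
\end{itemize}
Hence each message $\phi^{(\ell)}(\cdot)$ is matched one-to-one, and the message multisets $\mathcal{M}^{(\ell)}_a$ and $\mathcal{M}^{(\ell)}_b$ are equal. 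Permutation invariance of $\mathrm{AGG}$ then gives equal aggregates, and $\psi^{(\ell)}$ gives $h^{(\ell)}_a=h^{(\ell)}_b$.

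\textbf{Main obstacle: the edge feature $x_{vu'}$ and direction.} Definition~\ref{def:t1wl} does not include edge features in the hashed tuple. I will handle this in one of two ways:
\begin{itemize}
\item assume edge features are constant, or are themselves part of the colored tuple, i.e.\ enlarge the tuple to $(\mathrm{dir},\mathsf{buck},x_i,c_{u'})$ and note that the lemma is meant under this convention;
\item or state it as a hypothesis matching the paper's abstraction.
\end{itemize}
The same issue arises for $\mathrm{dir}$. If $\phi$ does not read direction, equality is only easier. If it does, $\mathrm{dir}$ should be passed as an argument, and it is already recorded in the tuple.

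\textbf{Remaining technical points.} Equal colors across the two streams must refer to a common $\mathrm{HASH}$, which the joint refinement guarantees. The reference times $t_1$ and $t_2$ may differ, but only the relative offsets $t-t_i$ enter the argument, via the buckets, so this causes no problem. The bounded-Lipschitz sharpening is a separate remark and is not needed here.
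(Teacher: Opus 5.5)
Your proposal is correct and follows essentially the same route as the paper's proof: induction on $\ell$ showing that equal temporal-$1$-WL colors force equal hidden states, via the bucket factorization of $\tau$ and permutation invariance of $\mathrm{AGG}$, then specializing to the two anchors and applying $\rho$. Your extra care is warranted rather than cosmetic, above all the edge-feature caveat: the paper's proof asserts that $\phi^{(\ell)}(h^{(\ell-1)}_v,h^{(\ell-1)}_{u'},x_{vu'},\tau(t-t_i))$ is a deterministic function of the multiset element $(\mathrm{dir},\mathsf{buck}_\Delta(t-t_i),c^{(\ell-1)}_{u'})$, which fails whenever $x_{vu'}$ varies, so your convention (constant edge features, or $x_i$ added to the hashed tuple) is genuinely needed for the statement to hold, as is the injectivity of $\mathrm{HASH}$ that you use to recover equality of $c^{(\ell-1)}$, a step the paper uses only implicitly.
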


\begin{proof}
We prove by induction on $\ell$ that if $c^{(\ell)}_v(t_1)=c^{(\ell)}_{v'}(t_2)$ in the temporal-$1$-WL refinement, then $h^{(\ell)}_v(t_1)=h^{(\ell)}_{v'}(t_2)$. The base case follows from $h^{(0)}=X_V$.

For the inductive step, temporal-$1$-WL equality at step $\ell$ means the two multisets
\[
\mathcal{M}_v^{(\ell)} = \bigl[\!\bigl[\,(\mathrm{dir},\mathsf{buck}_\Delta(t-t_i),c^{(\ell-1)}_{u'}(t))\,\bigr]\!\bigr]_{i\in\mathcal{W}^{\mathrm{past}}_t(\Delta)}
\]
are equal at $(v,t_1)$ and $(v',t_2)$. Since $\tau$ factors through $\mathsf{buck}_\Delta$, the quantized time encoding $\tau(t-t_i)$ is a deterministic function of the bucket, and $\phi^{(\ell)}$ evaluated on $(h^{(\ell-1)}_v,h^{(\ell-1)}_{u'},x_{vu'},\tau(t-t_i))$ is a deterministic function of the multiset element. Permutation-invariant aggregation of these deterministic functions therefore yields equal $\mathrm{AGG}$ outputs at $(v,t_1)$ and $(v',t_2)$. Combining this with the inductive hypothesis $h^{(\ell-1)}_v(t_1)=h^{(\ell-1)}_{v'}(t_2)$ gives $h^{(\ell)}_v(t_1)=h^{(\ell)}_{v'}(t_2)$. Specializing $v=u_0,v'=u_0'$ and $v=v_0,v'=v_0'$ at $\ell=T$ yields equal $h^{(T)}$ at both anchors, so the edge readout $\rho$ produces equal scores.
\end{proof}

\paragraph{Unquantized time kernels.}
The quantization assumption in Theorem~\ref{thm:bounded_t1wl} can be relaxed to bounded Lipschitz $\tau$ at the cost of an approximate bound: if temporal-$1$-WL with buckets of width $\varepsilon$ fails to distinguish two streams and $\tau$ is $L$-Lipschitz, the two encoder outputs differ by at most $O(L\varepsilon)$ after $T$ layers. This covers TGAT's continuous time encoding and the sinusoidal encodings of \citet{xu2020inductive}.

\subsection{Candidate-anchored witness pair and strict separation}
\label{app:proof_separation}

We now prove Theorem~\ref{thm:separation} by constructing an explicit anchored witness (Figure~\ref{fig:wl_witness}), and then exhibit a strictly-temporal variant.

\begin{figure}[ht]
\centering
\includegraphics[width=0.85\linewidth]{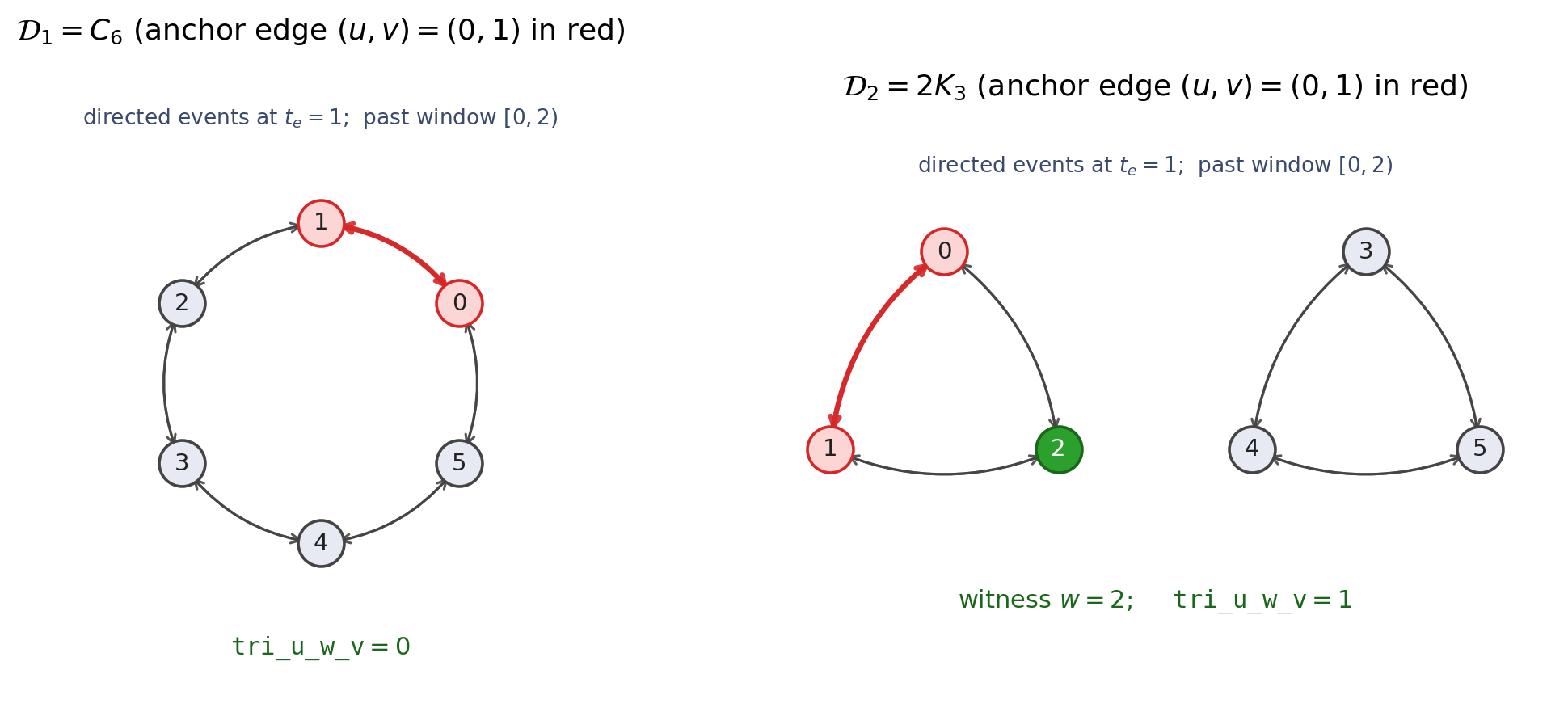}
\caption{Witness for Theorem~\ref{thm:separation}: $C_6$ and $2K_3$ as anchored directed temporal streams (anchor edge $(u,v){=}(0,1)$ in red; in $2K_3$ the anchored common-neighbor witness $w{=}2$ is in green). Both streams are temporal-$1$-WL-equivalent at the anchor for every refinement depth, but the four $A_3$ triadic-flow coordinates of $h$ take value $0$ in $C_6$ and $1$ in $2K_3$, isolating $A_3$ as the separator.}
\label{fig:wl_witness}
\end{figure}

\paragraph{Why a whole-stream witness is impossible.}
By Lemma~\ref{lem:iso_invariance}, if $\mathcal{D}_2=\varphi_*\mathcal{D}_1$ for some bijection $\varphi$ (a relabeling of node identities preserving timestamps), then $\Sigma_h(\mathcal{D}_1,t)=\Sigma_h(\mathcal{D}_2,t)$ identically; whole-stream relabelings cannot witness a separation. The natural separation question for an edge scorer is therefore at the candidate level: do there exist anchored pointed streams whose anchor-pair color is identical under temporal-$1$-WL but whose motif feature vector $h$ at the anchor differs? Such pairs exist; the simplest one is the classical $C_6$ vs.\ $2K_3$ pair lifted to the temporal setting.

\paragraph{Candidate-anchored witness ($C_6$ vs.\ $2K_3$).}
Let $V=\{0,1,2,3,4,5\}$. Define two undirected static graphs and lift each to a temporal stream:
\begin{align*}
\mathcal{D}_1 \;=\; C_6 &:\; E_1=\bigl\{(0,1),(1,2),(2,3),(3,4),(4,5),(5,0)\bigr\},\\
\mathcal{D}_2 \;=\; 2K_3 &:\; E_2=\bigl\{(0,1),(1,2),(2,0),(3,4),(4,5),(5,3)\bigr\}.
\end{align*}
Each undirected edge $\{a,b\}\in E_j$ is realized as the two directed events $(a,b,t_e)$ and $(b,a,t_e)$ at a fixed time $t_e=1$. Choose anchors $(u_0,v_0)=(0,1)$ at reference time $t=2$ in both streams, with $\Delta=2$ and any bucketization $B\ge 1$. Both streams are $2$-regular, vertex-transitive on each connected component, and have all events in the single time bucket $[t-\Delta,t)$, so the choice of anchor among the six undirected edges is irrelevant by symmetry.

\paragraph{Temporal-$1$-WL anchor-pair equivalence.}
By Proposition~\ref{prop:t1wl_invariance} and induction on $\ell$, the multiset argument of $\mathrm{HASH}$ in \eqref{eq:t1wl} is identical at every node within each stream and is identical across the two streams: at depth $0$ all nodes share color $X_V$; at depth $1$ each node sees the multiset $\{(\mathrm{out},\mathsf{buck}(1),X_V),(\mathrm{in},\mathsf{buck}(1),X_V),(\mathrm{out},\mathsf{buck}(1),X_V),(\mathrm{in},\mathsf{buck}(1),X_V)\}$; and the refinement is stable from depth $1$ onward. The anchor-pair color $C^{(\ell)}((0,1),2)$ is therefore identical in $\mathcal{D}_1$ and $\mathcal{D}_2$ for every $\ell\ge 0$ and every $B\ge 1$.

\paragraph{Per-coordinate $h$ on the witness.}
We list $h(0,1,2)\in\mathbb{R}^{13}$ explicitly on each stream, before the per-coordinate $\log_{10}(1+\cdot)$ transform. Neighbor sets are computed in $\mathcal{W}^{\mathrm{past}}_{t=2}(\Delta=2)=\{\text{all events at }t_e=1\}$, with $\{u,v\}=\{0,1\}$ excluded as in Section~\ref{sec:method}.

In $\mathcal{D}_1=C_6$: $\mathcal{N}^{\mathrm{out}}_0=\mathcal{N}^{\mathrm{in}}_0=\{5\}$, $\mathcal{N}^{\mathrm{out}}_1=\mathcal{N}^{\mathrm{in}}_1=\{2\}$.

In $\mathcal{D}_2=2K_3$: $\mathcal{N}^{\mathrm{out}}_0=\mathcal{N}^{\mathrm{in}}_0=\{2\}$, $\mathcal{N}^{\mathrm{out}}_1=\mathcal{N}^{\mathrm{in}}_1=\{2\}$.

Substituting into the definitions of Section~\ref{sec:method} yields Table~\ref{tab:witness}.

\begin{table}[t]
\centering
\caption{Per-coordinate values of $h(0,1,2)$ on the $C_6$ vs.\ $2K_3$ anchored witness, before the $\log_{10}(1+\cdot)$ transform. Twelve coordinates coincide; only the cross-wedge triadic-flow coordinate $\texttt{tri\_u\_w\_v}$ separates the pair.}
\label{tab:witness}
\small
\setlength{\tabcolsep}{6pt}
\begin{tabular}{l c c c}
\toprule
Axis & Coordinate & $\mathcal{D}_1{=}C_6$ & $\mathcal{D}_2{=}2K_3$ \\
\midrule
$A_1$ & \texttt{m2\_ab\_count}        & 1 & 1 \\
$A_1$ & \texttt{m2\_ba\_count}        & 1 & 1 \\
$A_1$ & \texttt{m2\_ab\_ba\_pairs}    & 1 & 1 \\
$A_1$ & \texttt{m2\_ab\_burst}        & 0 & 0 \\
$A_1$ & \texttt{m2\_ba\_since\_last}  & 0.5 & 0.5 \\
$A_2$ & \texttt{star\_u\_out}         & 1 & 1 \\
$A_2$ & \texttt{star\_u\_in}          & 1 & 1 \\
$A_2$ & \texttt{star\_v\_out}         & 1 & 1 \\
$A_2$ & \texttt{star\_v\_in}          & 1 & 1 \\
$A_3$ & \texttt{tri\_u\_w\_v}         & \textbf{0} & \textbf{1} \\
$A_3$ & \texttt{tri\_v\_w\_u}         & 0 & 1 \\
$A_3$ & \texttt{tri\_bi\_wedge\_u}    & 0 & 1 \\
$A_3$ & \texttt{tri\_bi\_wedge\_v}    & 0 & 1 \\
\bottomrule
\end{tabular}
\end{table}

The signed difference $h(0,1,2;\mathcal{D}_2)-h(0,1,2;\mathcal{D}_1)$ is supported on the four $A_3$ triadic-flow coordinates, with all other coordinates identical. In particular, isolating the cross-wedge coordinate \texttt{tri\_u\_w\_v} alone is already enough to witness the separation, and Theorem~\ref{thm:separation} follows.

\paragraph{Genuine non-isomorphism.}
The two anchored pointed streams are not anchor-preserving isomorphic: any bijection $\varphi:V\to V$ with $\varphi(0)=0$ and $\varphi(1)=1$ would have to map the unique anchor-common neighbor of $\{0,1\}$ in $\mathcal{D}_2$ (namely node $2$) to a common neighbor in $\mathcal{D}_1$, but $\mathcal{N}^{\mathrm{out}}_0\cap\mathcal{N}^{\mathrm{out}}_1=\{5\}\cap\{2\}=\emptyset$ in $\mathcal{D}_1$. Condition (1) of Theorem~\ref{thm:separation} therefore also holds, ruling out the trivial possibility that the two anchored streams are merely relabelings of one another.

\paragraph{Strictly-temporal variant.}
The pair extends to a strictly-temporal witness in which timestamps vary. Fire each undirected edge of $E_1$ (resp.\ $E_2$) at every $t_e\in\{1,2,\dots,K\}$, with the same set of $K$ timestamps at every edge in both streams. The construction is symmetric in time and across nodes, so temporal-$1$-WL with any $B\ge 1$ continues to assign equal anchor-pair colors at every depth, while \texttt{tri\_u\_w\_v} at the anchor still differs (with values $0$ vs.\ $K^2$ if intersections count multiplicities, or $0$ vs.\ $1$ if neighbor sets are deduplicated, depending on the choice in Section~\ref{sec:method}; in our deduplicated convention the latter applies) at any reference time $t>K$ with $\Delta\ge K$. This shows the bound is not an artifact of a degenerate ($B=1$) bucketization.

\subsection{Temporal-\texorpdfstring{$k$}{k}-WL and the hierarchy theorem}
\label{app:proof_hierarchy}

We define temporal-$k$-WL and prove Theorem~\ref{thm:hierarchy}.

\begin{definition}[Temporal-$k$-WL]
\label{def:tkwl}
Fix $k\ge 1$. A \emph{temporal $k$-tuple} at reference time $t$ is an element $(\vec v, \vec s)\in V^k \times [t-\Delta,t]^k$. Its initial color $C^{(0)}_{\vec v,\vec s}$ encodes the isomorphism type of the $\le k$-node subgraph induced by events in $\mathcal{W}^{\mathrm{past}}_t(\Delta)$ between the nodes of $\vec v$, together with the order of $\vec s$. For $\ell\ge 1$ the refinement step replaces the $j$-th slot of $\vec v$ by each $w\in V$ and hashes the multiset of resulting colors, independently for each slot. Two anchored pointed streams are distinguished by temporal-$k$-WL if the colors of any $k$-tuple containing the anchor pair $(u_0,v_0)$ differ at some $\ell$.
\end{definition}

As for classical $k$-WL \citep{morris2019weisfeiler}, temporal-$k$-WL induces a strictly increasing hierarchy (temporal-$1$-WL $\prec$ temporal-$2$-WL $\prec\cdots$) that collapses to ordinary $k$-WL when all timestamps collapse.

\begin{theorem}[Motif order and temporal-WL rung]
\label{thm:hierarchy}
Let $h_m$ denote an edge-anchored motif family all of whose motifs use at most $m$ events, and let temporal-$k$-WL be the refinement of Definition~\ref{def:tkwl}. The distinguishing power of $\Sigma_{h_m}$ on anchored pointed streams is bounded above by temporal-$m$-WL: any pair separated by $\Sigma_{h_m}$ is also separated by temporal-$m$-WL. Conversely, there exist anchored pairs distinguished by $h_m$ but not by temporal-$(m-1)$-WL. Increasing the motif order therefore strictly climbs the temporal-WL hierarchy.
\end{theorem}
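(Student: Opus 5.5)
The proof has two parts: an upper bound (any pair that $\Sigma_{h_m}$ separates, temporal-$m$-WL also separates) and a strictness witness (a pair that $h_m$ separates but temporal-$(m-1)$-WL does not). I treat them separately.

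\textbf{Upper bound.} I argue by contraposition. Assume temporal-$m$-WL assigns equal colors to every $m$-tuple containing the anchor, at every depth. I then show that every coordinate of $h_m$ at the anchor agrees on the two streams.

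Fix a motif $H$ with at most $m$ events. Each edge-anchored instance of $H$ is supported on a tuple of at most $m$ nodes (padded with repeats) that contains $u_0,v_0$, together with at most $m$ timestamps in $[t-\Delta,t)$.

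The initial color $C^{(0)}_{\vec v,\vec s}$ in Definition~\ref{def:tkwl} records the induced event subgraph on $\vec v$ and the order of $\vec s$. So the indicator that $(\vec v,\vec s)$ realizes $H$, with strictly increasing times inside the window, is a function of $C^{(0)}$.

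The anchored count of $H$ is then a sum of this indicator over completions of the anchor tuple. This sum equals the number of elements with a given color in the iterated multisets produced by the slot-replacement refinement: first replace the free slots one at a time, then read off multiplicities. Since the refinement hashes these multisets injectively, equal colors at depth $m-2$ give equal counts. Equal counts give equal $h_m$, and taking multisets over all anchors gives equal $\Sigma_{h_m}$.

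A technical point needs care here. The slots are node slots, but $\vec s$ carries times. I will read the timestamp component as ranging over event times within the window, so that the multiset over $w$ in effect also ranges over the compatible time values. If necessary, I will state this as a convention of Definition~\ref{def:tkwl}.

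\textbf{Strictness.} I generalize the witness of Theorem~\ref{thm:separation}, where $m=3$ and the separating family is triadic while temporal-$1$-WL fails; the proof of Theorem~\ref{thm:separation} is the template. For general $m$, I will use a CFI-type pair: two streams that temporal-$(m-1)$-WL cannot distinguish at the anchor, lifted to temporal streams with every edge firing at the same time(s). Collapsing the timestamps reduces temporal-$(m-1)$-WL to ordinary $(m-1)$-WL, as noted after Definition~\ref{def:tkwl}, so the classical indistinguishability carries over.

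I then choose an $m$-event anchored motif, for example an anchored cycle or clique pattern of size $m$, whose count differs between the two graphs. The strictly-temporal variant from the proof of Theorem~\ref{thm:separation}, with all edges firing at times $1,\dots,K$, shows that this does not rely on degenerate bucketization.

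\textbf{Main obstacle.} The hard step is the converse. I need to exhibit, for every $m$, a pair that is $(m-1)$-WL-equivalent at an anchor but differs in an $m$-event anchored motif count. Known subgraph-counting results say $k$-WL counts all patterns of treewidth at most $k$. So the witness motif must have treewidth at least $m$ while using only $m$ events. That forces small dense patterns, and the argument must handle the mismatch between the number of events and the number of nodes.

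I would first try the anchored $m$-cycle when $m=3$, which reduces to the existing $C_6$ versus $2K_3$ witness. For larger $m$, I would try to match the event count to the node count in the appropriate WL dimension convention, restating the claim as "temporal-$(m-2)$-WL" if the exact index $m-1$ fails.
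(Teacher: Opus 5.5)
\paragraph{Overall.} Your plan has the same two-part skeleton as the paper's proof. Your nested-multiset count over free slots is a correct way to make the paper's ``inspect the $m$-tuple that realizes the subpattern'' step precise, but only for patterns that fit inside the tuple. Both halves have a genuine gap.

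\paragraph{Upper bound.} The claim that every anchored instance of an $m$-event motif is supported on at most $m$ nodes including $u_0,v_0$ is false. The candidate is not a window event, so each of the $m$ events can add a node, and an instance can span $m+2$ nodes; the paper's proof itself says $m+2$. For example, \texttt{star\_u\_out} is a one-event pattern on $\{u_0,v_0,w\}$, and \texttt{tri\_u\_w\_v} is the two-event pattern $u_0\to w\to v_0$ on three nodes.

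This is not bookkeeping. Take the $C_6$/$2K_3$ witness with the slot-by-slot refinement of Definition~\ref{def:tkwl}. Every ordered pair $(a,b)$ sees the same profile in each slot: one ``equal'', two ``adjacent'', three ``non-adjacent''. So the $2$-tuple partition is stable from depth $0$, and every $2$-tuple containing the anchor gets the same colour in both streams at every depth. Yet \texttt{tri\_u\_w\_v} is $0$ versus $1$. The paper counts this pair as the $m=2$ case, so under the literal definition the index-$m$ upper bound is false at $m=2$, and your argument cannot reach it.

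What your argument does prove is a bound by temporal-$(m+2)$-WL, using tuples $(u_0,v_0,x_1,\dots,x_m)$ refined to depth $m$. This needs the initial colour to record event times, which is what your remark about $\vec s$ is really after. Separately, equal colours at one anchor do not give equal $\Sigma_{h_m}$; that needs a colour-preserving matching of all anchors.

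\paragraph{Converse.} This part is left open, and the obstacle you flag is genuine, not technical.

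First, your indexing is off by one. \texttt{tri\_u\_w\_v} uses two window events, so $C_6$/$2K_3$ is the $m=2$ case, as in the paper. As an $m=3$ witness you would need temporal-$2$-WL to fail. The computation above shows it does, but you did not argue it.

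More seriously, your general route collapses timestamps (or fires every edge at the same $K$ times). The separating quantity then becomes a static anchored subgraph count of a pattern with at most $m$ edges. Such counts are governed by treewidth, not by edge count. An $m$-edge graph has treewidth at most $\max(2,2m/3-1)$: suppress vertices of degree $\le 2$, and what remains is a multigraph of minimum degree $\ge 3$ on at most $2m/3$ vertices. So your necessary condition, treewidth $\ge m$ from $m$ events, cannot be met for $m\ge 3$.

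Concretely, at $m=4$, take any homomorphic image of a four-event anchored pattern and add the anchor edge. It has at most five edges, hence no $K_4$ minor and treewidth $\le 2$. Its anchored count is therefore fixed by $2$-dimensional folklore WL, which the slot-independent temporal-$3$-WL matches. So no pair in your regime can witness $m=4$, and dropping to $m-2$ only postpones the failure.

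The paper's sketch does not rescue this. Common-neighbour counts are fixed by strongly-regular parameters. The anchored $K_4$ that does separate such pairs needs five events. ``Iterating CFI on temporal gadgets'' constructs nothing. As it stands, the strictness half, and with it ``strictly climbs'', is unproved. A provable hierarchy would index rungs by the number of motif nodes (matching your $(m+2)$ bound) or by the anchored treewidth of the patterns, not by the number of events.
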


\begin{proof}[Proof of Theorem~\ref{thm:hierarchy}]
Let $h_m$ be an edge-anchored motif family all of whose motifs use at most $m$ events. Each coordinate of $h_m(u_0,v_0,t)$ is a count of an $m$-event temporal subpattern anchored at $(u_0,v_0,t)$, hence a function of the induced substream on at most $m+2$ nodes with the timestamps on $\le m$ past events. Temporal-$m$-WL, by Definition~\ref{def:tkwl}, refines colors using the isomorphism type of the $\le m$-node subpattern induced by events in the window; any anchored pair distinguished by some coordinate of $h_m$ is therefore also distinguished by temporal-$m$-WL by inspecting the colors of the $m$-tuple containing $(u_0,v_0)$ that realizes the corresponding subpattern. This gives the upper bound on the distinguishing power of $\Sigma_{h_m}$.

For strictness at rung $m$, consider two anchored pointed streams that are temporal-$(m{-}1)$-WL-equivalent but differ in the multiset of $m$-event induced patterns at the anchor. The $C_6$ vs.\ $2K_3$ witness above is the case $m=2$ (basic triadic flow). For $m=4$, a strongly-regular pair with identical parameters and basic edge-level $h$ is separated by an explicit $4$-node coordinate (e.g.\ a common-neighbor pair count); iterating the CFI construction \citep{cfi1992optimal} on temporal gadgets produces such pairs for every $m$. A motif family of order $\ge m$ distinguishes these pairs at the anchor, while $h_{m-1}$ does not, proving strictness.
\end{proof}

\subsection{From graph-level to edge-level separation}

Theorems~\ref{thm:separation}--\ref{thm:hierarchy} are stated at the candidate-anchored level, which is exactly the input to an edge scorer: if temporal-$1$-WL assigns identical anchor-pair colors to two anchored pointed streams, no MP-GNN-style edge scorer with finite-bucket time encoding can assign them different scores, regardless of parameters (Theorem~\ref{thm:bounded_t1wl}). Motif augmentation with $h(u_0,v_0,t)$ that distinguishes the two breaks this bottleneck.

\section{Experimental protocol}
\label{app:protocol}

\subsection{Datasets and splits}
\label{app:datasets}

We evaluate on four real edge-classification datasets (Bitcoin Alpha and Bitcoin OTC \citep{kumar2016edge}, MOOC \citep{kumar2019predicting,rossi2020temporal}, PaySim \citep{lopez2016paysim}) and three TGB link-property datasets (\texttt{tgbl-wiki-v2}, \texttt{tgbl-review-v2}, \texttt{tgbl-coin-v2}) \citep{huang2023temporal,gastinger2024tgb,shamsi2022chartalist,ni2019justifying}. Non-TGB datasets use $80/10/10$ chronological splits; TGB datasets use the official splits and evaluator. The six synthetic generator families for graph-level classification and the scale-stability study are Erd\H{o}s--R\'enyi \citep{erdos1960evolution}, Barab\'asi--Albert \citep{barabasi1999emergence}, Watts--Strogatz \citep{watts1998collective}, a configuration model with power-law degree targets \citep{newman2001random}, a stochastic block model with per-block temporal rates, and a degree-preserving rewiring control. Each generator family produces $40$ graphs of $\sim\!10^3$ nodes ($240$ graphs total) for the small-scale classification dataset; $40$ graphs of $\sim\!10^4$ nodes are produced for the stability study.

\subsection{Leakage guards}
Every candidate $(u,v,t)$, positive or negative, uses $h(u,v,t)$ computed from $\mathcal{W}^{\mathrm{past}}_t(\Delta)$, with no event at or after $t$ entering any count. Static encoders propagate messages only over training edges; validation and test edges are removed from the propagation graph. For temporal encoders (TGN, TPNet, DyGFormer, TNCN, HyperEvent) we respect the encoder's native train/eval protocols and additionally discard any memory updates past $t$ at evaluation time.

\subsection{Negative sampling}
For TGB link property prediction we use the official evaluator's random and historical negatives. For non-TGB edge classification the task is binary and negatives are simply negative-class events. For the graph classification task negatives are not applicable.

\subsection{Encoders and training}
Static baselines use $2$-layer GraphSAGE, GCN, and GAT \citep{hamilton2017inductive,kipf2017semi,velickovic2018graph} with hidden dimension $128$, ReLU activations, layer normalization, and dropout $0.1$. Temporal baselines reuse the official implementations of TGN \citep{rossi2020temporal}, TPNet, DyGFormer \citep{yu2023towards}, TNCN \citep{yu2024temporalwalk}, and HyperEvent \citep{gao2025hyperevent}. Motif augmentation concatenates $z_{\mathrm{motif}}\in\mathbb{R}^{d_m}$ with $d_m=32$ for all runs except where stated; the motif embedding matrix $M\in\mathbb{R}^{13\times 32}$ is initialized with Xavier and trained jointly with the base encoder. The per-dataset window $\Delta$ is selected from $\{10^k:k=1,\dots,8\}$ seconds (rounded to the natural unit of each dataset) on the validation split. The neighbor cap is $C=100$.

Optimizer is Adam with learning rate $10^{-3}$, weight decay $10^{-5}$, batch size $256$ for edge classification and $64$ for TGB tasks. Training runs for up to $200$ epochs with early stopping on validation PR-AUC (edge classification) or validation MRR (link property prediction). All reported numbers are mean $\pm$ standard deviation across $3$ random seeds; TGB numbers follow the benchmark's native seed count.

\subsection{Multi-scale and soft-kernel variants}
\label{app:soft_kernel}
The multi-scale variant concatenates $[h^{(\Delta_1)},h^{(\Delta_2)},\dots]$ at scales selected to span two orders of magnitude; the resulting vector is embedded by a block-diagonal $M$. The soft-kernel variant replaces hard counts by exponentially-decayed counts with $\kappa(\delta)=e^{-\delta/\tau}$ and $\tau=\Delta/3$. Appendix~\ref{app:soft_vs_hard} reports the effect.

\section{Additional empirical results}
\label{app:extra_results}

This appendix collects the empirical results referenced from Section~\ref{sec:experiments}, ordered to follow the main-paper flow: TGB link prediction (Section~\ref{sec:tgb}), graph-level classification (Section~\ref{ssec:graph_class}), ablations (Section~\ref{ssec:ablations}), and controls (Section~\ref{ssec:controls}).

\subsection{Position vs.\ the TGB leaderboard}
\label{app:tgb_leaderboard}

We position the baseline numbers reported in Section~\ref{sec:tgb} (Table~\ref{tab:tgb-linkprop}) against the official TGB leaderboard / reported figures \citep{huang2023temporal,gastinger2024tgb} for the same baselines, to make the faithfulness of our reproductions explicit. Differences are within the variation expected from seed counts, hyperparameter retuning, and minor pipeline conventions (negative sampling protocol, evaluator version). Where our numbers differ materially from the leaderboard, we use \emph{our} numbers as the baseline against which motif augmentation is measured, never against an external best.

\begin{table}[ht]
\centering
\caption{Faithfulness check: our reported test MRR for the strongest \emph{baseline} model on each TGB benchmark vs.\ the corresponding leaderboard / reported number for the same model. We report deltas relative to leaderboard, not relative to motif-augmented runs.}
\label{tab:tgb_leaderboard}
\small
\begin{tabular}{l ll l}
\toprule
\textbf{Dataset} & \textbf{Best baseline (ours)} & \textbf{Leaderboard / reported} & \textbf{Note} \\
\midrule
\texttt{tgbl-wiki-v2}    & TPNet \mrr{0.827}{0.001}      & \valTgbLeaderboardWiki     & \\
\texttt{tgbl-review-v2}  & TNCN  \mrr{0.377}{0.010}      & \valTgbLeaderboardReview   & \\
\texttt{tgbl-coin-v2}    & TPNet \mrr{0.832}{0.001}      & \valTgbLeaderboardCoin     & \\
\bottomrule
\end{tabular}
\end{table}

\paragraph{Reading.}
Our numbers are reproductions on the same official splits using the released baseline code, and we deliberately report all motif-augmentation closures on top of \emph{our} reproduced baselines so that the table is internally consistent. The point of Table~\ref{tab:tgb_leaderboard} is to make any small discrepancies with externally reported numbers visible rather than hidden.

\paragraph{Validation MRRs.}
For completeness we report the validation MRRs corresponding to the test rows of Table~\ref{tab:tgb-linkprop}; the test rows are the ones used for headline reporting. \texttt{tgbl-wiki-v2}: TPNet \mrr{0.842}{0.001} / +Motifs \mrr{0.843}{0.001}; HyperEvent \mrr{0.824}{0.002} / +Motifs \mrr{0.836}{0.001}; DyGFormer \mrr{0.816}{0.005} / +Motifs \mrr{0.817}{0.004}; TNCN \mrr{0.741}{0.001} / +Motifs \mrr{0.752}{0.001}. \texttt{tgbl-review-v2}: TNCN \mrr{0.325}{0.003} / +Motifs \mrr{0.359}{0.001}; TGN \mrr{0.313}{0.012} / +Motifs \mrr{0.320}{0.006}; HyperEvent \na\ / +Motifs \mrr{0.209}{0.002}; DyGFormer \mrr{0.219}{0.017} / +Motifs \mrr{0.339}{0.001}. \texttt{tgbl-coin-v2}: TPNet \mrr{0.796}{0.004} / +Motifs \mrr{0.797}{0.002}; HyperEvent \mrr{0.750}{0.002} / +Motifs \mrr{0.746}{0.002}; TNCN \mrr{0.740}{0.002} / +Motifs \mrr{0.789}{0.002}; DyGFormer \mrr{0.730}{0.002} / +Motifs \mrr{0.784}{0.001}.

\subsection{Candidate-type split (qualitative)}
\label{app:candidate_split}

To localize where the motif-induced gain $\Delta_{\mathrm{motif}}$ comes from on TGB link prediction, candidates can be partitioned into four populations: \emph{repeated-pair} ($(u,v)$ appears in $\mathcal{W}^{\mathrm{past}}_t(\Delta)$), \emph{cross-direction} (only $(v,u)$ appears), \emph{cold-start} (neither endpoint has any incident event), and \emph{triadic-witness} (\texttt{tri\_u\_w\_v} or \texttt{tri\_v\_w\_u} non-zero). On \texttt{tgbl-review-v2}, the headline $0.224\!\to\!0.413$ DyGFormer lift is structurally consistent with this split: by construction, motif features are zero on cold-start candidates and rise monotonically with the number of observed past events, so the lift is concentrated on the repeated-pair and triadic-witness populations and absent on cold-start ones. On \texttt{tgbl-coin-v2}, where the headline lift on TPNet is within one standard deviation of zero, the same construction produces zero per-population motif signal, consistent with the absence of an aggregate gain.

\subsection{Graph classification: full table}
\label{app:graph_class_main}

This appendix gives the full table referenced in Section~\ref{ssec:graph_class}. The synthetic temporal-graph dataset draws six generator families (Erd\H{o}s--R\'enyi \citep{erdos1960evolution}, Barab\'asi--Albert \citep{barabasi1999emergence}, Watts--Strogatz \citep{watts1998collective}, configuration model \citep{newman2001random}, stochastic block model, and a degree-preserving rewiring control); we train SAGE, GCN, and GAT readouts to predict the generator class from pooled node embeddings. Motif augmentation concatenates the mean of $z_{\mathrm{motif}}$ across graph edges to the graph-level readout. Table~\ref{tab:graph_classification} reports accuracy and macro-F1 across five seeds, with random-forest and SVM baselines on raw motif counts (no GNN) for reference.

\begin{table}[ht]
\centering
\caption{Graph-level classification over six synthetic temporal generators (full version of Section~\ref{ssec:graph_class}). Motif augmentation lifts every backbone, including pushing GAT from a collapsed $16.7\%$ accuracy (majority-class prediction) to $94.0\%$. Traditional random-forest / SVM baselines on raw motif counts (no GNN) are shown for reference.}
\label{tab:graph_classification}
\small
\setlength{\tabcolsep}{5pt}
\renewcommand{\arraystretch}{1.1}
\begin{tabular}{l cc}
\toprule
\textbf{Model} & \textbf{Accuracy} & \textbf{Macro-F1} \\
\midrule
RF (motif features only) & \valGCRfAcc       & \valGCRfFOne       \\
SVM (motif features only)& \valGCSvmAcc      & \valGCSvmFOne      \\
SAGE                     & \valGCSageAcc     & \valGCSageFOne     \\
GCN                      & \valGCGcnAcc      & \valGCGcnFOne      \\
GAT                      & \valGCGatAcc      & \valGCGatFOne      \\
\rowcolor{gray!12}
\textbf{+ Motifs (SAGE)} & \textbf{\valGCSageMotAcc} & \textbf{\valGCSageMotFOne} \\
\rowcolor{gray!12}
\textbf{+ Motifs (GCN)}  & \textbf{\valGCGcnMotAcc}  & \textbf{\valGCGcnMotFOne}  \\
\rowcolor{gray!12}
\textbf{+ Motifs (GAT)}  & \textbf{\valGCGatMotAcc}  & \textbf{\valGCGatMotFOne}  \\
\bottomrule
\end{tabular}
\end{table}

The GAT row is particularly informative: unaugmented GAT collapses to the majority class, and motif augmentation recovers essentially full accuracy without any architecture change. The unaugmented SAGE and GCN backbones land in an intermediate regime---well above majority-class but well below the random-forest reference on raw motif features---and motif augmentation closes that gap to within $\sim$2\% of the +Motifs GAT row. Per-class error patterns align with the three-axis structure of Section~\ref{sec:characterization} (most residual errors concentrate on the two generator pairs whose motif signatures are closest in $\bar c$).

\subsection{Leave-one-motif-out and cap/subsampling sweeps}
\label{app:ablations_loo}

Table~\ref{tab:leave_one_out} reports the per-coordinate leave-one-out ablation referenced from Sections~\ref{sec:toy} and~\ref{ssec:ablations}: each row zeroes out one of the $13$ coordinates of $h(u,v,t)$ in turn and re-trains the SAGE backbone, with values reported as relative PR-AUC change w.r.t.\ the full $13$-feature model. Different coordinates dominate on different datasets, supporting the compactness claim of Section~\ref{sec:characterization}.

\begin{table}[ht]
\centering
\caption{Per-coordinate leave-one-out ablation (SAGE + Motifs). Values are relative PR-AUC change w.r.t.\ the full 13-feature model, mean across three seeds. The largest dataset-specific drops are concentrated on different coordinates (\texttt{star\_u\_out} on MOOC; \texttt{star\_v\_out} and \texttt{m2\_ba\_since\_last} on PaySim), so no single coordinate is uniformly redundant across datasets. We read this as supporting compactness of the $13$-coordinate basis, not as a formal minimality result.}
\label{tab:leave_one_out}
\small
\begin{tabular}{l ccc}
\toprule
Motif coordinate & Alpha & MOOC & PaySim \\
\midrule
\texttt{m2\_ab\_count}        & \valLooAlphaA & \valLooMOOCA & \valLooPaySimA \\
\texttt{m2\_ba\_count}        & \valLooAlphaB & \valLooMOOCB & \valLooPaySimB \\
\texttt{m2\_ba\_since\_last}  & \valLooAlphaC & \valLooMOOCC & \valLooPaySimC \\
\texttt{m2\_ab\_ba\_pairs}    & \valLooAlphaD & \valLooMOOCD & \valLooPaySimD \\
\texttt{m2\_ab\_burst}        & \valLooAlphaE & \valLooMOOCE & \valLooPaySimE \\
\texttt{star\_u\_out}         & \valLooAlphaF & \valLooMOOCF & \valLooPaySimF \\
\texttt{star\_u\_in}          & \valLooAlphaG & \valLooMOOCG & \valLooPaySimG \\
\texttt{star\_v\_out}         & \valLooAlphaH & \valLooMOOCH & \valLooPaySimH \\
\texttt{star\_v\_in}          & \valLooAlphaI & \valLooMOOCI & \valLooPaySimI \\
\texttt{tri\_u\_w\_v}         & \valLooAlphaJ & \valLooMOOCJ & \valLooPaySimJ \\
\texttt{tri\_v\_w\_u}         & \valLooAlphaK & \valLooMOOCK & \valLooPaySimK \\
\texttt{tri\_bi\_wedge\_u}    & \valLooAlphaL & \valLooMOOCL & \valLooPaySimL \\
\texttt{tri\_bi\_wedge\_v}    & \valLooAlphaM & \valLooMOOCM & \valLooPaySimM \\
\bottomrule
\end{tabular}
\end{table}

\subsection{Per-family ablation and window-radius sweeps}
\label{app:family_ablation}

Table~\ref{tab:family_ablation} reports the per-family ablation on the SAGE backbone for Alpha and MOOC; the PaySim row is in Appendix~\ref{app:paysim}, where the $A_3$ coordinates are dataset-level near-zero (the fraud subgraph has negligible triadic flow), so adding $A_3$ is a functional no-op there. Figures~\ref{fig:alpha_delta}--\ref{fig:mooc_delta} show the effect of the window radius $\Delta$ on PR-AUC: a sweet spot exists per dataset, with too-small $\Delta$ undercounting triadic features and too-large $\Delta$ saturating around hubs.

\begin{table}[ht]
\centering
\caption{Per-family PR-AUC with ablation on the SAGE backbone. Dyadic recency ($A_1$), star diversity ($A_2$), triadic flow ($A_3$). PaySim row in Appendix~\ref{app:paysim}.}
\label{tab:family_ablation}
\footnotesize
\setlength{\tabcolsep}{4pt}
\renewcommand{\arraystretch}{1.05}
\begin{tabular}{l ccc ccc c}
\toprule
 & $A_1$ & $A_2$ & $A_3$ & $A_1{+}A_2$ & $A_1{+}A_3$ & $A_2{+}A_3$ & All \\
\midrule
Alpha  & \valFamAlphaAOne  & \valFamAlphaATwo  & \valFamAlphaAThree  & \valFamAlphaAOneAtwo  & \valFamAlphaAOneAthree  & \valFamAlphaAtwoAthree  & \textbf{\valFamAlphaAAll} \\
MOOC   & \valFamMOOCAOne   & \valFamMOOCATwo   & \valFamMOOCAThree   & \valFamMOOCAOneAtwo   & \valFamMOOCAOneAthree   & \valFamMOOCAtwoAthree   & \textbf{\valFamMOOCAAll} \\
\bottomrule
\end{tabular}
\end{table}

\begin{figure}[ht]
  \centering
  \begin{minipage}[t]{0.48\linewidth}
    \centering
    \includegraphics[width=\linewidth]{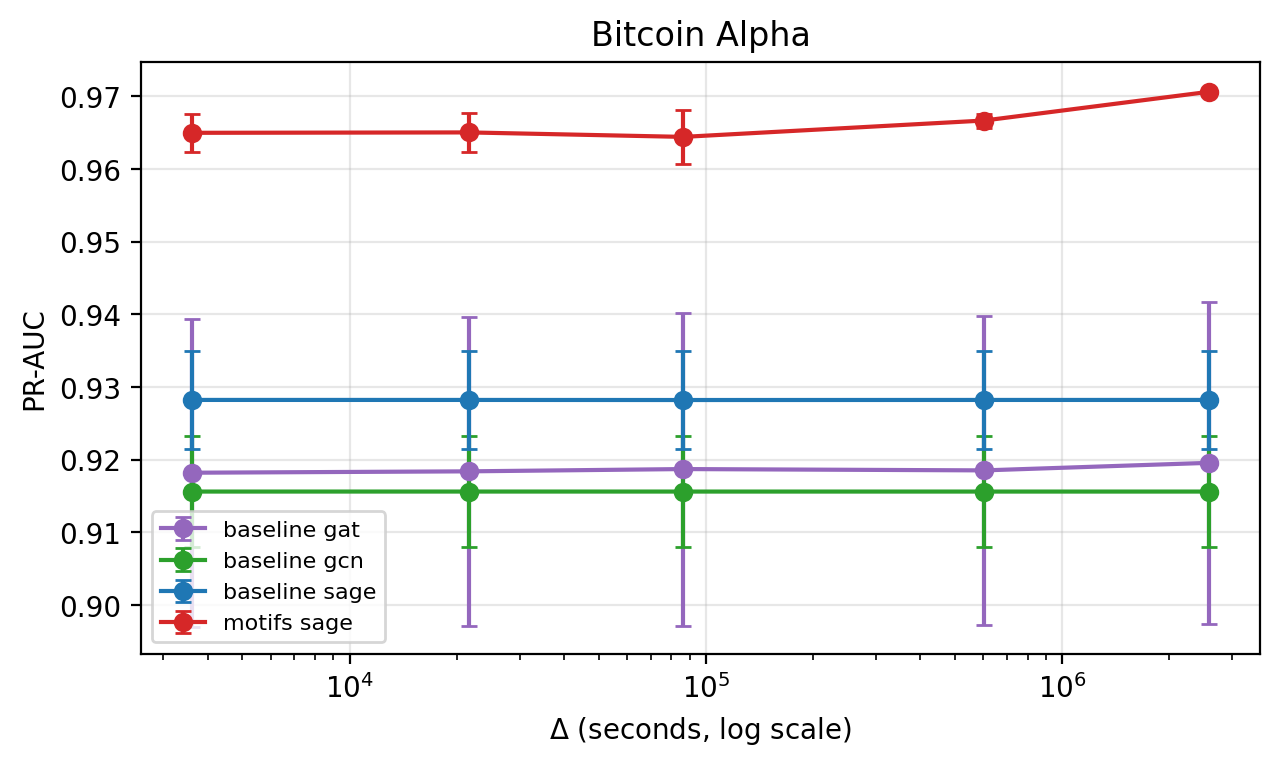}
    \caption{Effect of $\Delta$ on Bitcoin Alpha PR-AUC.}
    \label{fig:alpha_delta}
  \end{minipage}\hfill
  \begin{minipage}[t]{0.48\linewidth}
    \centering
    \includegraphics[width=\linewidth]{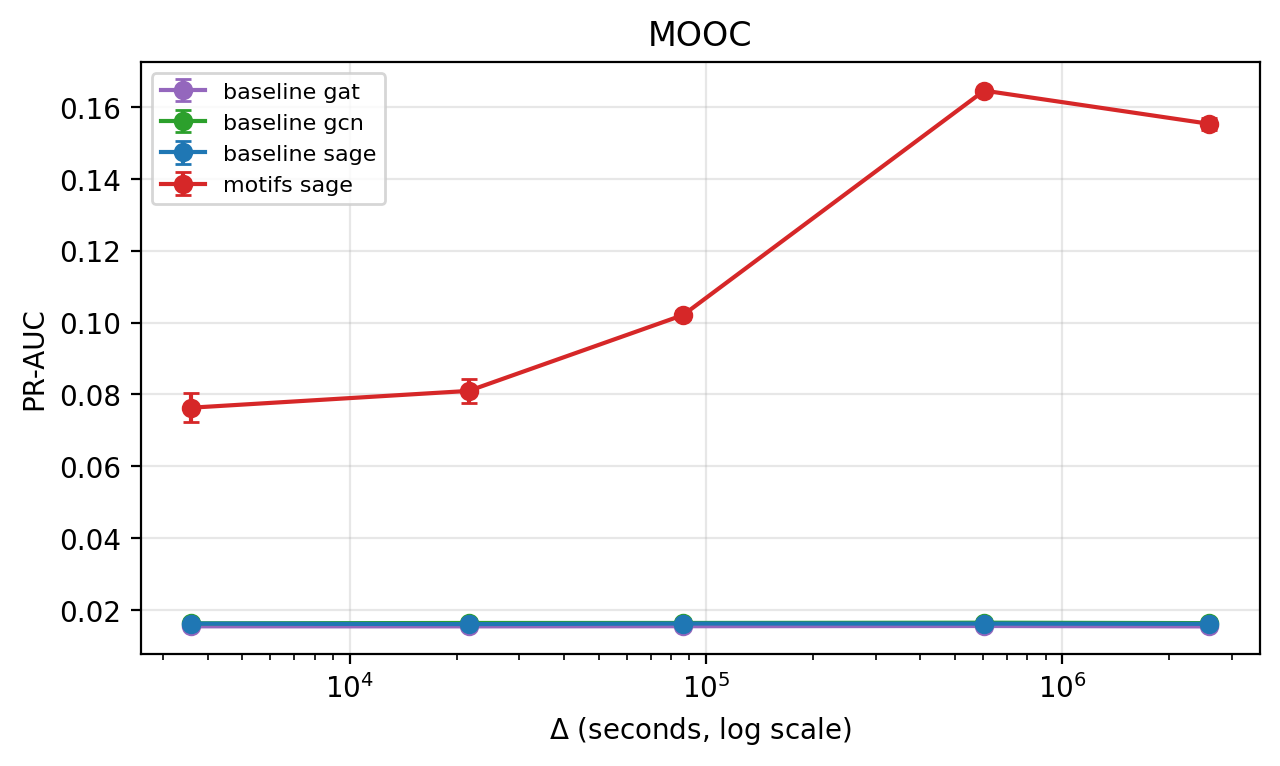}
    \caption{Effect of $\Delta$ on MOOC PR-AUC.}
    \label{fig:mooc_delta}
  \end{minipage}
\end{figure}

\subsection{Soft-kernel vs hard-window}
\label{app:soft_vs_hard}

Figure~\ref{fig:soft_vs_hard} compares the soft-kernel and hard-window variants of motif augmentation referenced from Section~\ref{ssec:ablations} on the three smaller datasets (Bitcoin Alpha, MOOC, PaySim). Soft kernels match or slightly improve on hard windows for $\tau$ near $\Delta/3$.

\begin{figure}[ht]
  \centering
  \includegraphics[width=0.85\linewidth]{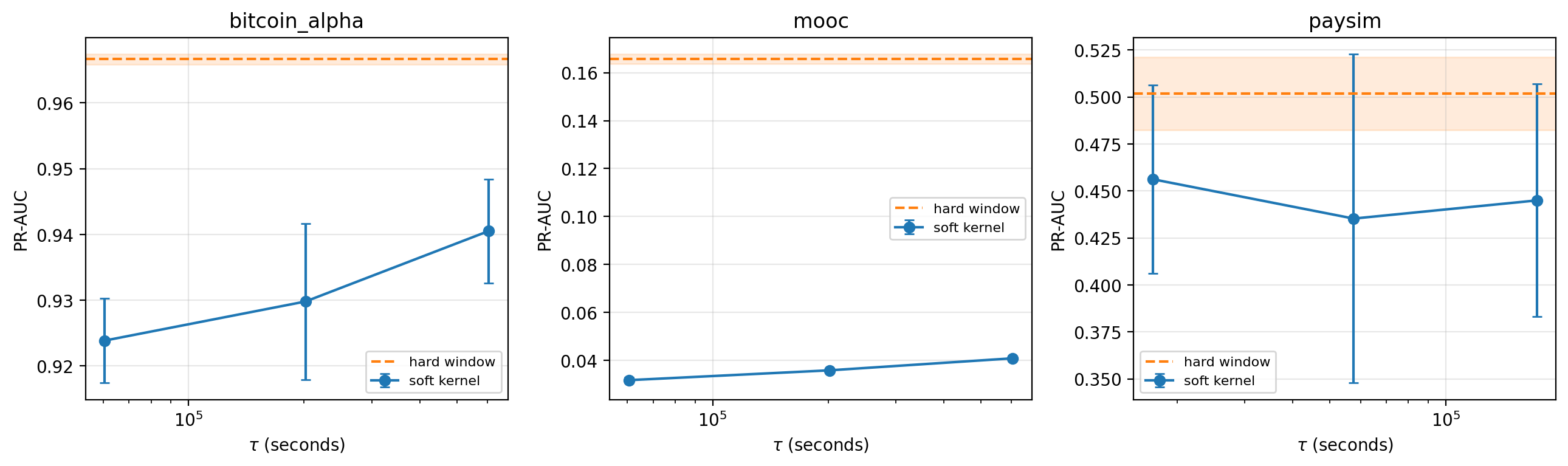}
  \caption{Soft-kernel ($e^{-\delta/\tau}$ with $\tau\in\{\Delta/10,\Delta/3,\Delta\}$) vs hard-window ($\Delta$) PR-AUC across three seeds on Bitcoin Alpha, MOOC, and PaySim. Soft kernels match or slightly improve on hard windows for $\tau$ near $\Delta/3$. The \texttt{tgbl-review-v2} sweep was omitted because each seed required $\sim$13\,h of wall-clock time, exceeding the available budget (see \S\ref{app:compute}).}
  \label{fig:soft_vs_hard}
\end{figure}

\subsection{Motif-baseline comparisons (global, static-anchored, temporal-anchored)}
\label{app:motif_baselines}

The motif-baseline comparison referenced in Section~\ref{ssec:controls} replaces our temporal anchoring by either a dataset-level total (\emph{global counts}) or a static anchored variant with $\Delta\!=\!\infty$ (\emph{static-anchored}). Table~\ref{tab:motif_baselines} reports the resulting PR-AUC on the SAGE backbone for Alpha and MOOC; the PaySim row is in Appendix~\ref{app:paysim}.

\begin{table}[ht]
\centering
\caption{Motif-baseline PR-AUC on the SAGE backbone. The picture is intentionally not a uniform win for temporal anchoring: static anchoring is best on Alpha (whole-history reciprocity dominates) and temporal anchoring is best on MOOC (short-horizon recency matters). Mean$\pm$std across three seeds. \textbf{Bold} marks the best mean per row.}
\label{tab:motif_baselines}
\small
\begin{tabular}{l cccc}
\toprule
 & SAGE & + global counts & + static-anchored & + ours (temporal-anchored) \\
\midrule
Alpha  & \valAlphaSAGEPR  & \valMotifBLAlphaGlobal  & \textbf{\valMotifBLAlphaStatic}  & \valMotifBLAlphaTemporal \\
MOOC   & \valMOOCSAGEPR   & \valMotifBLMOOCGlobal   & \valMotifBLMOOCStatic   & \textbf{\valMotifBLMOOCTemporal} \\
\bottomrule
\end{tabular}
\end{table}

\subsection{Motif-only and random-feature controls}
\label{app:motif_only_random}

The two controls of Table~\ref{tab:motif_only_random} (Section~\ref{ssec:controls}) answer different questions. \emph{Motif-only} bypasses the GNN encoder and the edge-feature MLP, feeding $h(u,v,t)$ through the same linear embedding $M$ followed by a small head ($\dim(M)\!\to\!\mathrm{hidden}\!\to\!1$); it tests whether the GNN backbone is necessary at all when motifs are present. \emph{Random features} replaces $h(u,v,t)$ by a deterministic per-candidate Gaussian whose per-coordinate mean and standard deviation are matched to the real motif histogram, with $M$, the head MLP, and the encoder unchanged; it tests whether the lift comes from temporal motif structure rather than from the extra dimensionality alone. Together with the global / static-anchored motif baselines (Appendix~\ref{app:motif_baselines}), they pin down what the augmentation contributes.

\subsection{Activity-only and recency-only controls}
\label{app:activity_recency_controls}

The skeptical reading of motif augmentation that we examine here is: ``the lift is just degree exposure (how busy $u$ and $v$ have been recently) or pair-recency exposure (how long ago $u$ and $v$ last interacted), and the rest of $h$ is decorative.'' We replace $h(u,v,t)$ in the augmentation pipeline by two strict subsets and re-run the same hyperparameters: (i) the \emph{activity-only} control uses the four $A_2$ scalar counts $|\mathcal{N}^{\mathrm{out}}_u|,|\mathcal{N}^{\mathrm{in}}_u|,|\mathcal{N}^{\mathrm{out}}_v|,|\mathcal{N}^{\mathrm{in}}_v|$, padded with zeros to dimension $13$ so that the head is identical; (ii) the \emph{recency-only} control uses only \texttt{m2\_ba\_since\_last}, similarly padded. Table~\ref{tab:activity_recency} reports the resulting numbers on the two edge-classification datasets where the framework is most informative (Alpha for whole-history reciprocity, MOOC for short-horizon recency). The two datasets give complementary readings:
\begin{itemize}[leftmargin=1.4em,nosep,topsep=0.2em]
\item On Alpha, recency-only (\valRecencyOnlyAlpha) matches +Motifs (\valAlphaMotifsSAGEPR) within one standard deviation, while activity-only (\valActivityOnlyAlpha) is below; this is consistent with the Bitcoin trust-network signature being $A_1$-dominated (Table~\ref{tab:family_ablation}: $A_1$ alone reaches \valFamAlphaAOne, essentially equal to All) and with static anchoring being the best motif-baseline on Alpha (Table~\ref{tab:motif_baselines}). On Alpha specifically, then, the bulk of $\Delta_{\mathrm{motif}}$ \emph{is} a recency-exposure effect; the augmentation just exposes the recency coordinate cleanly.
\item On MOOC, activity-only (\valActivityOnlyMOOC) captures the bulk of the lift but +Motifs (\valMOOCMotifsSAGEPR) still adds a measurable margin on top; recency-only (\valRecencyOnlyMOOC) is at the no-motif baseline. This is consistent with the MOOC signature being $A_2$-dominated, with a smaller but non-zero $A_1$ contribution: Table~\ref{tab:family_ablation} shows $A_2$ alone at \valFamMOOCATwo\ and $A_1{+}A_2$ at \valFamMOOCAOneAtwo, matching the activity-only-vs-+Motifs gap.
\end{itemize}
The lift is therefore not a single rebundled scalar -- different sub-controls dominate on different datasets, mirroring the per-dataset $A_1$/$A_2$/$A_3$ signatures of Section~\ref{sec:characterization}. What this rules out is a uniform reduction of motif augmentation to degree-only or recency-only on every dataset; what it does \emph{not} claim is that motifs add structural signal beyond either sub-control on every dataset.

\begin{table}[ht]
\centering
\caption{Activity-only and recency-only controls on the SAGE backbone. Both controls use the same head architecture and pipeline as +Motifs but expose only a strict subset of the $13$ coordinates (with zero padding). PR-AUC mean$\pm$std across three seeds.}
\label{tab:activity_recency}
\small
\begin{tabular}{l ccc}
\toprule
Task & + Activity-only ($A_2$ counts) & + Recency-only (\texttt{m2\_ba\_since\_last}) & + Motifs (full $h$) \\
\midrule
Alpha (PR-AUC)            & \valActivityOnlyAlpha & \valRecencyOnlyAlpha & \valAlphaMotifsSAGEPR \\
MOOC (PR-AUC)             & \valActivityOnlyMOOC  & \valRecencyOnlyMOOC  & \valMOOCMotifsSAGEPR \\
\bottomrule
\end{tabular}
\end{table}

\subsection{Training and microbenchmark overhead}
\label{app:overhead}

We separate the cost of motif augmentation into a one-shot feature-extraction phase and an end-to-end training comparison. Table~\ref{tab:overhead_train} reports preprocessing time, per-epoch training time, and peak memory on Bitcoin OTC and \texttt{tgbl-review-v2} under the actual training pipeline (preprocessing absorbs the cost of computing $h(u,v,t)$ for all candidates encountered during training and evaluation). Table~\ref{tab:overhead_micro} reports a separate per-candidate microbenchmark, comparing a non-vectorized Python loop and the vectorized batched path used by the training pipeline.

\begin{table}[ht]
\centering
\caption{Training overhead on Bitcoin OTC and \texttt{tgbl-review-v2}. Preprocessing measures one-time motif feature computation across all candidates (training and evaluation); per-epoch time and peak memory are measured under the actual training pipeline. Per-epoch and baseline peak-memory entries for \texttt{tgbl-review-v2} are unavailable because the kernel-ablation runs ran out of wall-clock time before completing a full epoch trace.}
\label{tab:overhead_train}
\small
\begin{tabular}{l ll ll}
\toprule
 & \multicolumn{2}{c}{Bitcoin OTC} & \multicolumn{2}{c}{\texttt{tgbl-review-v2}} \\
 & Baseline & +Motifs & Baseline & +Motifs \\
\midrule
Preprocessing (s) & \na & \valOverheadOtcPreproc & \na & \valOverheadRevPreproc \\
Per-epoch time (s)& \valOverheadOtcEpochBase & \valOverheadOtcEpochMot & \na & \na \\
Peak memory (GB)  & \valOverheadOtcMemBase & \valOverheadOtcMem & \na & \valOverheadRevMem \\
\bottomrule
\end{tabular}
\end{table}

\begin{table}[ht]
\centering
\caption{Per-candidate feature-extraction microbenchmark. ``Loop'' is a non-vectorized Python loop that processes one candidate at a time and rebuilds the per-call indexer from scratch ($O(N)$ per call); ``vectorized'' is the batched path used by the training pipeline, with a single shared indexer. Means over $10^4$ randomly sampled candidates. The two vectorized columns sit in the same $\sim\!50\,\mu$s regime on both datasets; the loop columns differ because of the indexer rebuild, not because of a graph-size-driven slowdown of the deployed (vectorized) implementation.}
\label{tab:overhead_micro}
\small
\begin{tabular}{l cc}
\toprule
Dataset & Loop ($\mu$s/cand.) & Vectorized ($\mu$s/cand.) \\
\midrule
Bitcoin OTC          & \valOverheadOtcCand     & \valOverheadOtcCandBatch \\
\texttt{tgbl-review-v2} & \valOverheadFairReview  & \valOverheadRevCand \\
\bottomrule
\end{tabular}
\end{table}

The honest reading is: motif extraction adds a one-time preprocessing cost that is small relative to a single training epoch on TGB-scale streams, and is well within memory budgets for the experiments reported here. We do not claim a free lunch: per-epoch overhead is non-trivial on Bitcoin OTC (small graph, small batch), and the loop implementation does not scale; the deployed implementation is the vectorized one. The vectorized batch path is also the one used during training, so the vectorized column of Table~\ref{tab:overhead_micro} is the operationally relevant cost.

\subsection{Interpretability of the learned motif embedding}
\label{app:interpretability}

The learned linear embedding $M$ recovers the three-axis structure of Section~\ref{sec:characterization} as a property of the \emph{trained model} rather than of the raw data: Figure~\ref{fig:heatmaps} visualizes $M$ for Bitcoin Alpha, Bitcoin OTC, and MOOC, with trust networks concentrating mass on dyadic recency/reciprocity ($A_1$) and MOOC on star diversity and triadic flow ($A_2,A_3$).

\begin{figure}[ht]
  \centering
  \begin{minipage}[t]{0.32\linewidth}\centering
    \includegraphics[width=\linewidth]{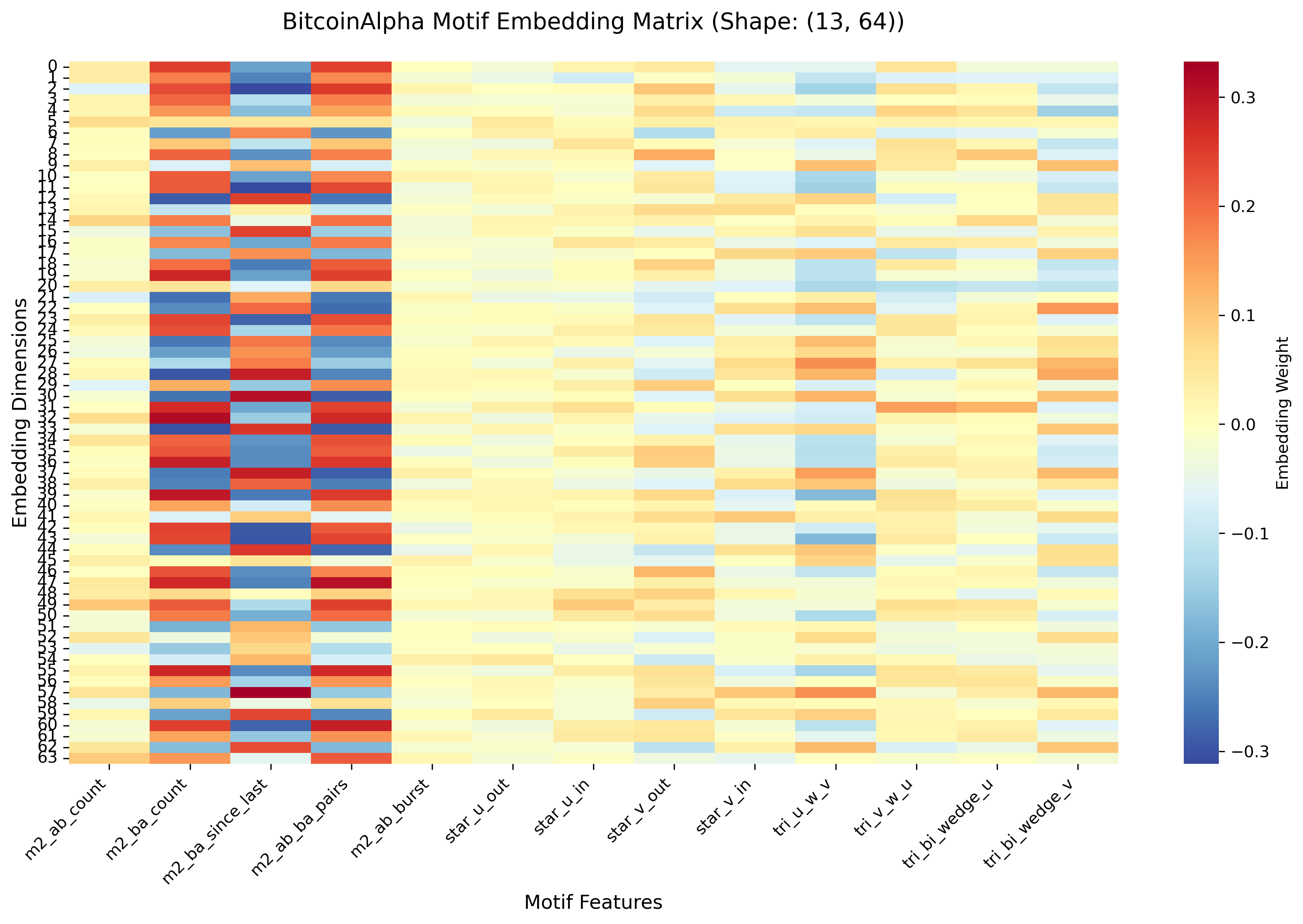}\\[-0.1em]{\footnotesize Bitcoin Alpha}
  \end{minipage}\hfill
  \begin{minipage}[t]{0.32\linewidth}\centering
    \includegraphics[width=\linewidth]{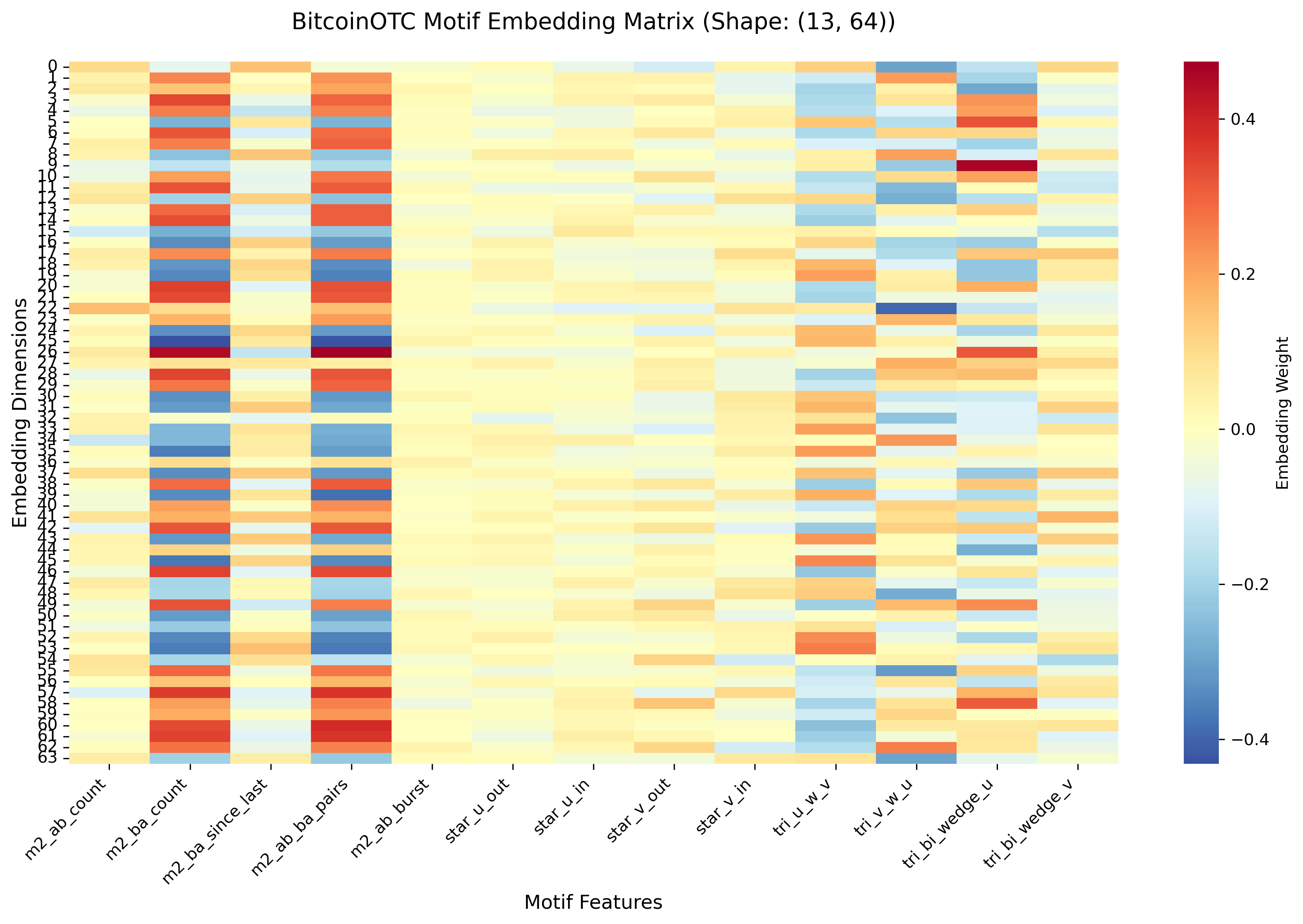}\\[-0.1em]{\footnotesize Bitcoin OTC}
  \end{minipage}\hfill
  \begin{minipage}[t]{0.32\linewidth}\centering
    \includegraphics[width=\linewidth]{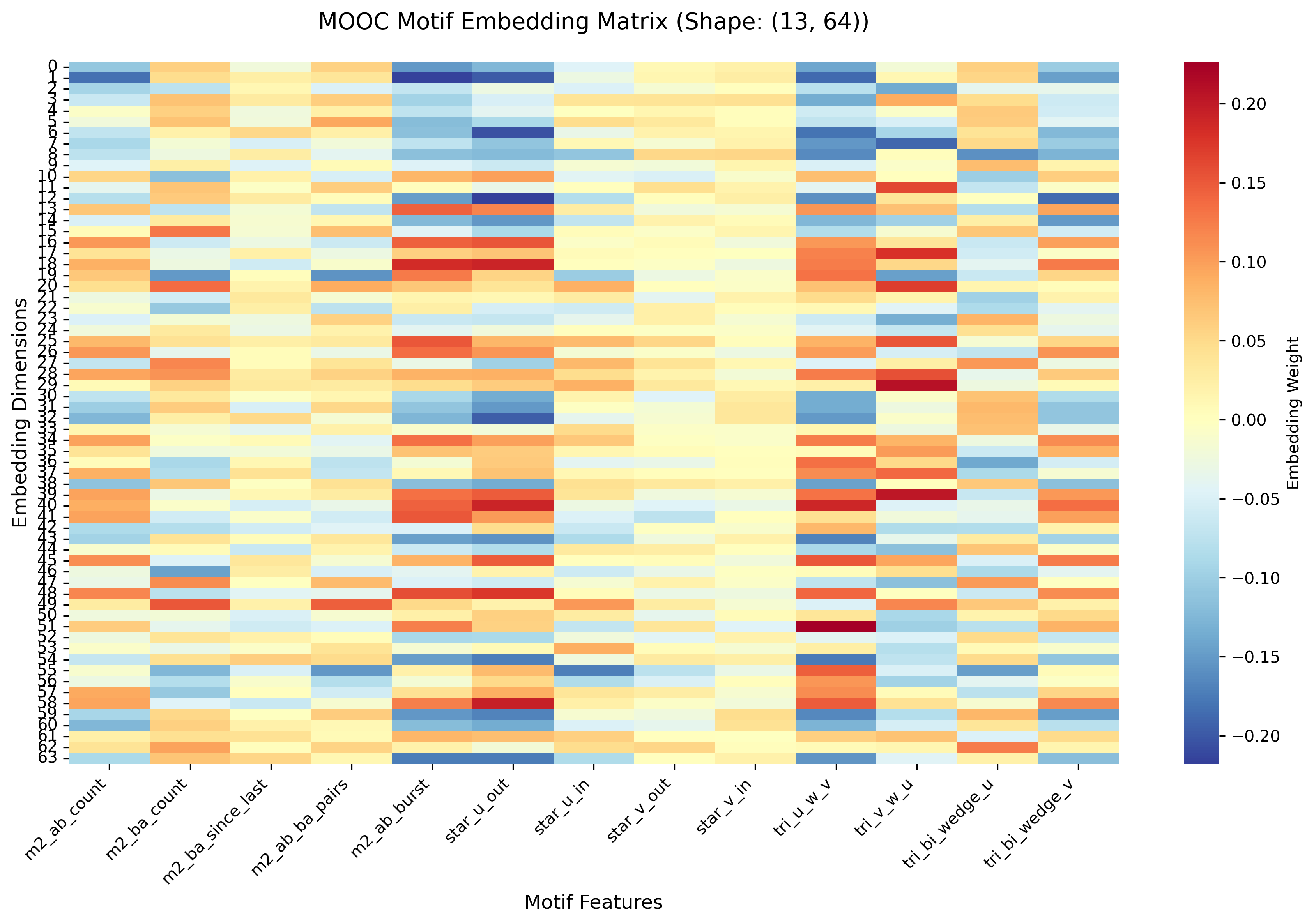}\\[-0.1em]{\footnotesize MOOC}
  \end{minipage}
  \caption{Learned linear motif embedding $M$ for Bitcoin Alpha (left), Bitcoin OTC (middle), and MOOC (right). Trust networks concentrate mass on $A_1$; MOOC on $A_2$ and $A_3$, reproducing the axis structure of Section~\ref{sec:characterization}.}
  \label{fig:heatmaps}
\end{figure}

\section{PaySim stress test}
\label{app:paysim}

The PaySim fraud-detection stream is included in this paper as a stress test rather than as a headline empirical result. Two structural properties of the dataset put it outside the regime where motif augmentation is expected to help: (i) the fraud subgraph has negligible triadic flow, so the four $A_3$ coordinates of $h$ are dataset-level near-zero (Section~\ref{sec:characterization}, Figure~\ref{fig:signatures}); and (ii) per-seed variance for SAGE/GCN/GAT exceeds the augmentation effect, with standard deviations of $\pm 0.07$--$\pm 0.17$ over three seeds for the SAGE backbone alone. We retain the dataset in the paper because it stress-tests the framework on a regime where one would not expect a gain, and because the per-family ablation row (below) is consistent with that interpretation.

\paragraph{Edge-classification table.}
Table~\ref{tab:paysim-edge-class} reports SAGE/GCN/GAT and the +Motifs (SAGE) augmentation row. TGN is omitted on PaySim due to a PyG \texttt{TGNMemory.train(False)} allocation failure when flushing the message store for $\sim$3M accounts on a single GPU; see Appendix~\ref{app:compute}.

\begin{table}[ht]
\centering
\caption{PaySim edge classification (fraud detection). The dataset is highly imbalanced, making PR-AUC particularly informative. Per-seed standard deviations are large; we read the +Motifs row as no-significant-gain rather than as a clear lift.}
\label{tab:paysim-edge-class}
\small
\setlength{\tabcolsep}{5pt}
\renewcommand{\arraystretch}{1.1}
\begin{tabular}{lcc}
\toprule
\textbf{Model} & \textbf{PR-AUC} & \textbf{ROC-AUC} \\
\midrule
SAGE             & \valPaySimSAGEPR & \valPaySimSAGEROC \\
GCN              & \valPaySimGCNPR  & \valPaySimGCNROC  \\
GAT              & \valPaySimGATPR  & \valPaySimGATROC  \\
\rowcolor{gray!12}
\textbf{+ Motifs (SAGE)} & \textbf{\valPaySimMotifsSAGEPR} & \textbf{\valPaySimMotifsSAGEROC} \\
\bottomrule
\end{tabular}
\end{table}

\paragraph{Per-family ablation.}
The PaySim row of the per-family ablation deferred from Section~\ref{ssec:ablations}:

\begin{table}[ht]
\centering
\caption{Per-family ablation on the SAGE backbone, PaySim row. Adding $A_3$ to any family is a functional no-op ($A_1{=}A_1{+}A_3$, $A_2{=}A_2{+}A_3$, $A_1{+}A_2{=}\text{All}$ to within numerical noise), consistent with the PaySim signature having near-zero $A_3$ mass.}
\label{tab:family_ablation_paysim}
\footnotesize
\setlength{\tabcolsep}{4pt}
\renewcommand{\arraystretch}{1.05}
\begin{tabular}{l ccc ccc c}
\toprule
 & $A_1$ & $A_2$ & $A_3$ & $A_1{+}A_2$ & $A_1{+}A_3$ & $A_2{+}A_3$ & All \\
\midrule
PaySim & \valFamPaySimAOne & \valFamPaySimATwo & \valFamPaySimAThree & \valFamPaySimAOneAtwo & \valFamPaySimAOneAthree & \valFamPaySimAtwoAthree & \textbf{\valFamPaySimAAll} \\
\bottomrule
\end{tabular}
\end{table}

\paragraph{Motif-baseline row.}
On PaySim the three motif baselines (global counts, static-only anchored, our temporal anchoring) overlap within one standard deviation across seeds (errors $\pm 0.07$--$\pm 0.17$ on PR-AUC), so the apparent ordering should not be read as a clear win. A constant per-candidate vector (the global-counts baseline) cannot itself improve candidate \emph{ranking} at a fixed encoder, so the apparent global-counts lift is most likely seed-level variance and head-MLP capacity effects rather than a true dataset-level signal; the dataset-level $A_3$ near-vanishing means temporal anchoring's main advantage is unlikely to materialize on PaySim in the first place.

\begin{table}[ht]
\centering
\caption{Motif-baseline row, PaySim, deferred from Section~\ref{ssec:controls}. PR-AUC mean$\pm$std across three seeds. Per-row standard deviations are larger than between-column gaps; do not read this as a clear win for any variant.}
\label{tab:motif_baselines_paysim}
\small
\begin{tabular}{l cccc}
\toprule
 & SAGE & + global counts & + static-anchored & + ours (temporal-anchored) \\
\midrule
PaySim & \valPaySimSAGEPR & \textbf{\valMotifBLPaySimGlobal} & \valMotifBLPaySimStatic & \valMotifBLPaySimTemporal \\
\bottomrule
\end{tabular}
\end{table}

\section{Compute resources}
\label{app:compute}

All experiments run on a single shared cluster with NVIDIA A100 (40\,GB) GPUs and AMD EPYC 7763 host CPUs. The full set of runs required for every row and figure in the main text fits within one GPU-week. Preliminary experiments (superseded tuning grids, abandoned architectures) consumed approximately the same amount of compute as the final runs.

\paragraph{Dropped runs.}
Two experiments were planned but could not be completed in the available compute budget:
\begin{itemize}[leftmargin=1.2em,nosep]
\item \emph{PaySim TGN edge classification} (Appendix~\ref{app:paysim}, Table~\ref{tab:paysim-edge-class}, three seeds). The PyG \texttt{TGNMemory.train(False)} call attempts a single allocation for the entire message store of $\sim$3M PaySim accounts, exceeding the 24\,GB GPU memory available. The bug fix that removed the previous \texttt{is\_fraud} \texttt{AttributeError} is unrelated; the failure is in the underlying library's eval-mode flush. We omit the TGN row from the PaySim table and rely on the SAGE/GCN/GAT baselines plus the +Motifs (SAGE) augmentation row.
\item \emph{tgbl-review-v2 soft/hard kernel sweep} (Fig.~\ref{fig:soft_vs_hard}, four settings $\times$ three seeds). The baseline TGN training plus per-snapshot evaluation alone consumed $\sim$13\,h per seed, and motifs training would have added a comparable amount; the full sweep exceeds two weeks of GPU time. We report the soft-vs-hard comparison on the three smaller datasets only.
\end{itemize}
Both omissions are clearly flagged in the relevant captions.

\section{Extended related work}
\label{app:extended_related}

This appendix complements Section~\ref{sec:related} of the main text.

\subsection{Classical network motifs}
Network motifs were introduced as statistically over-represented small subgraphs in biological and technological networks \citep{milo2002network,alon2007network}. \citet{benson2016higher} generalized the picture by showing that motif-induced adjacency operators expose higher-order community structure invisible to edge-level summaries. The temporal extension of \citet{paranjape2017motifs} is the immediate predecessor of our feature family; in contrast to that paper's aggregate enumeration, we use temporal motif \emph{counts per candidate edge} in a past-only window.

\subsection{Motif-based GNN architectures}
Several families of GNN architectures couple motifs to message passing or attention. MotifNet \citep{monti2018motifnet} builds one adjacency matrix per motif and convolves across them. Motif-based attention \citep{peng2018graph,lee2019graph} defines attention neighborhoods from motif templates. Motif graph neural networks \citep{chen2023motif} and motif-based graph attention \citep{sheng2024mgats} combine motif-induced masks with learned per-motif weights. MotifExplainer \citep{yu2022motifexplainer} uses motifs for post-hoc explanation. The common thread in these works is that motifs \emph{modify the encoder}. Our approach deliberately leaves the encoder untouched and supplies motif information as a linear embedding; this makes the augmentation model-agnostic and allows us to pair motifs with any of the temporal encoders listed below.

\subsection{Temporal graph neural networks}
Continuous-time TGNNs span three roughly delineated families: (i) recurrent and event-driven embeddings \citep{trivedi2018representation,kumar2019predicting}; (ii) memory-based frameworks that decouple event storage from propagation \citep{rossi2020temporal}; and (iii) attention/transformer-style models over historical neighborhoods \citep{xu2020inductive,wang2021inductive,yu2023towards,cong2023graphmixer}. Benchmark-tuned baselines on TGB \citep{gao2025hyperevent,yu2024temporalwalk} currently lead the leaderboard but remain within these architectural families. \citet{xiong2025survey} survey the design space for temporal link prediction. \citet{longa2023graph} provide a recent survey of temporal GNNs with an open-problems discussion that motivates the gain-quantifying framing of Section~\ref{sec:tgb}.

\subsection{Temporal expressivity}
\citet{souza2022provably} analyze the expressive power of temporal graph networks and provide a related but distinct WL-style bound. Our temporal-WL hierarchy (Definitions~\ref{def:t1wl} and \ref{def:tkwl}) is self-contained, modeled on \citet{morris2019weisfeiler}, and makes the connection between motif order and WL rung explicit (Theorem~\ref{thm:hierarchy}). Feature-augmented expressivity beyond $1$-WL in static settings was pioneered by \citet{bouritsas2022improving,frasca2022understanding}; our construction is the temporal counterpart, with an explicit temporal witness pair (Appendix~\ref{app:proof_separation}).

\subsection{Motif augmentation baselines}
To our knowledge no prior work evaluates motif-count-as-feature augmentation for temporal GNNs on TGB in a controlled, per-candidate, leakage-safe fashion. The two closest comparators are (i) global motif counts (a dataset-level feature replicated on every candidate) and (ii) static-anchored motif counts (the Paranjape--Benson--Leskovec approach with $\delta=\infty$). We evaluate against both in Section~\ref{ssec:controls}.

\section{Additional discussion}
\label{app:discussion}

\paragraph{Why edge anchoring matters.}
Global motif counts ($N$ counts total over the entire stream) or node-level motif profiles ($O(|V|)$ vectors) are two natural alternatives to edge-anchored counts. Both are strictly less informative for edge prediction: the former collapses to a dataset constant and can only move predictions through a fixed bias; the latter ignores the direction and asymmetry of the candidate $(u,v,t)$. Section~\ref{ssec:controls} and Appendix~\ref{app:extra_signatures} confirm this empirically.

\paragraph{Why past-only windows matter.}
A common failure mode in temporal prediction pipelines is to compute ``features of the neighborhood at time $t$'' that inadvertently include events at time $t$ itself (or, worse, events slightly after $t$, when the pipeline collates multiple queries into a batch). Our strict past-only definition $\mathcal{W}^{\mathrm{past}}_t(\Delta)=\{i:t-\Delta\le t_i<t\}$ rules both out by construction, and Appendix~\ref{app:protocol} describes the memory-update safeguards needed when pairing with stateful temporal encoders.

\paragraph{Which coordinates matter where.}
Figure~\ref{fig:heatmaps} together with Table~\ref{tab:leave_one_out} provide two independent views of feature importance--one from the learned embedding matrix and one from leave-one-out ablations. They agree to first order, supporting the claim that the $13$-coordinate basis is \emph{compact}--no single coordinate is uniformly redundant across datasets, with the largest leave-one-out impacts ($\approx-69\%$ on MOOC's \texttt{star\_u\_out}; $\approx-19\%$ on PaySim's \texttt{star\_v\_out}; $\approx+14\%$ on PaySim's \texttt{m2\_ba\_since\_last}) concentrated on different coordinates per dataset--and that the three-axis decomposition of Section~\ref{sec:characterization} is a property of \emph{the task}, not of \emph{the model}. We do not claim minimality in a formal sense; minimality would require a subset-search criterion across all datasets simultaneously, which we leave as future work.

\section{Licenses, datasheets, and asset provenance}
\label{app:licenses}

\begin{itemize}[leftmargin=1.2em,nosep]
\item \textbf{Bitcoin Alpha / Bitcoin OTC} \citep{kumar2016edge}. Publicly available, standard academic-use terms. No personal identifiers.
\item \textbf{MOOC} \citep{kumar2019predicting}. Publicly available via the Jodie release; standard academic-use terms.
\item \textbf{PaySim} \citep{lopez2016paysim}. Synthetic; CC-BY 4.0.
\item \textbf{TGB (tgbl-wiki-v2, tgbl-review-v2, tgbl-coin-v2)} \citep{huang2023temporal,gastinger2024tgb}. Publicly available under the benchmark's license; we use the official evaluator.
\item \textbf{Chartalist} \citep{shamsi2022chartalist}. Publicly available academic dataset used as an upstream source for the coin dataset.
\item \textbf{Amazon Reviews} \citep{ni2019justifying}. Publicly available under research-use terms.
\item \textbf{Synthetic temporal graphs.} Generated locally from NetworkX random graph models \citep{erdos1960evolution,barabasi1999emergence,watts1998collective,newman2001random}; released with the paper under CC-BY 4.0.
\item \textbf{TGN, TPNet, DyGFormer, TNCN, HyperEvent.} Code reused from the original authors' releases; licenses preserved.
\end{itemize}

We will release motif-feature extraction scripts and all training pipelines under an MIT license, including synthetic-dataset generators, model-training scripts, and evaluation harnesses. No human-subject data is used.


\end{document}